\newtheorem{theorem}{Theorem}
\newtheorem{othertheorem}{othertheorem}[section]
\newtheorem{lemma}[othertheorem]{Lemma}
\newtheorem{corollary}[othertheorem]{Corollary}
\newtheorem{proposition}[othertheorem]{Proposition}
\newtheorem{fact}[othertheorem]{Fact}
\newtheorem{rem}[othertheorem]{Remark}
\def\R{\mathbb{R}}
\numberwithin{equation}{section}
\title{A Dynamical View on Optimization Algorithms of Overparameterized Neural Networks}
\author{Zhiqi Bu\thanks{Department of Applied Mathematics and Computational Science, University of Pennsylvania. Email: {\tt zbu@sas.upenn.edu} } 
 \and Shiyun Xu\thanks{Department of Applied Mathematics and Computational Science, University of Pennsylvania. Email: {\tt shiyunxu@sas.upenn.edu} } 
 \and Kan Chen\thanks{Department of Applied Mathematics and Computational Science, University of Pennsylvania. Email: {\tt kanchen@sas.upenn.edu} } }
\newcommand{\m}{\bm{m}}
\renewcommand{\v}{\bm{v}}
\newcommand{\w}{\bm{w}}
\newcommand{\x}{\bm{x}}
\newcommand{\y}{\bm{y}}
\newcommand{\W}{\bm{W}}
\newcommand{\Q}{\bm{Q}}
\renewcommand{\a}{\bm{a}}
\renewcommand{\H}{\bm{H}}
\newcommand{\I}{\mathbb{I}}
\newcommand{\f}{\bm{f}}
\date{}
\begin{document}

\maketitle
\begin{abstract}
 When equipped with efficient optimization algorithms, the over-parameterized neural networks have demonstrated high level of performance even though the loss function is non-convex and non-smooth. While many works have been focusing on understanding the loss dynamics by training neural networks with the gradient descent (GD), in this work, we consider a broad class of optimization algorithms that are commonly used in practice. For example, we show from a dynamical system perspective that the Heavy Ball (HB) method can converge to global minimum on mean squared error (MSE) at a linear rate (similar to GD); however, the Nesterov accelerated gradient descent (NAG) may only converge to global minimum sublinearly.
  
  Our results rely on the connection between neural tangent kernel (NTK) and finitely-wide over-parameterized neural networks with ReLU activation, which leads to analyzing the limiting ordinary differential equations (ODE) for optimization algorithms. We show that, optimizing the non-convex loss over the weights corresponds to optimizing some strongly convex loss over the prediction error. As a consequence, we can leverage the classical convex optimization theory to understand the convergence behavior of neural networks. We believe our approach can also be extended to other optimization algorithms and network architectures.
\end{abstract}

\section{Introduction}
Neural Tangent Kernel (NTK) \cite{jacot2018neural} has taken a huge step in understanding the behaviors of over-parameterized neural networks: it has been shown that the training process of a neural network can be characterized by a kernel matrix. Although the NTK matrix is randomly initialized and time-dependent, it in fact stays close to a constant limiting kernel matrix, on which the analysis of neural networks renders much easier. Leveraging NTK, researches have extensively studied how different aspects affect the convergence of training neural networks. For example, weight initialization with large variance can accelerate the convergence but worsen the generalization ability of neural networks \cite{dugradient,xiaodisentangling}. It has been analyzed in \cite{du2018gradient} that a two-layer fully-connected neural network (FCNN) with ReLU activation provably and globally converges to zero training loss. Later the results are enriched by extending the global convergence to multi-layer (deep) FCNN, convolutional neural networks (CNN), recurrent neural networks (RNN) and residual neural networks (ResNet) \cite{du2018gradient,allen2019convergence,arora2019exact, allen2019convergence0,zou2019improved,zou2020gradient}. Especially, the comprehensive result in \cite{allen2019convergence} covers all above-mentioned network architectures, as well as different losses such as mean square error (MSE) and cross-entropy. Specifically, the over-parameterized neural networks enjoy an exponentially decaying MSE and a polynomially decaying cross-entropy. Other examples studied the convergence with NTK under different input distribution \cite{du2018gradient,allen2019convergence}, activation functions \cite{du2018power} and normalization layers \cite{dukler2020optimization}.

There are also a wealth of recent literature on the generalization properties of the wide neural networks based on NTK. For example, using a data-dependent Rademacher complexity measure, a generalization bound independent of network size for a two-layer, ReLU activated, FCNN can be obtained \cite{neyshabur2018role, arora2019fine,allen2019learning}. Several lines of work \cite{bartlett2017spectrally, neyshabur2017pac,cisse2017parseval} gave spectrally-normalized margin-based generalization bound to explain the nice generalization phenomenon of over-parameterized networks. The others \cite{dziugaite2017computing, neyshabur2017pac, li2018learning} derived the generalization bound from the view of PAC-Bayes and compression approaches. More examples include
the generalization bound for high dimension data \cite{advani2020high, ghorbani2019linearized} and with weight decay \cite{lee2020finite}. 

While many works have been focused on the convergence theory of neural network architectures, one would argue that the efficient optimization of a neural network is as important as the design of neural networks. Nevertheless, to our knowledge, this is the first paper to go beyond GD (and SGD) and analyze the global convergence of neural networks from a more general optimization algorithm perspective. In this work, we employ some state-of-the-art optimizers and analyze their convergence behavior under the NTK regime. A related work is \cite{lee2019wide}, where authors also study GD with momentum under the NTK regime but only empirically. 
\\\\
We seek to answer the following questions:
\begin{itemize}
\item Can different optimization algorithms besides GD provably find global minimum in the non-convex optimization of neural networks?
\item Can we leverage the classic convex optimization theory to explain the behavior (e.g. acceleration and rate) of optimization algorithms when training the neural networks?
\end{itemize}

Intriguingly, the known results of global and linear convergence of GD (and SGD) on MSE of neural networks is somewhat surprising, since losses are non-convex with respect to weights. Note that the traditional convex optimization theory only claims $O(1/t)$ rate of GD, even for convex losses, and we only achieve the linear convergence rate when the loss is strongly convex. This phenomenon suggests a close relationship between the non-convex optimization in neural networks and some convex optimization problems that we will elaborate in this paper. We establish such connection rigorously which allows us to conveniently employ the classic convex optimization theory at low cost, thus bridging both worlds to inspire new insights into the training of neural networks.

Once we view the evolution of neural networks during training as an ordinary differential equation (ODE), a.k.a. the gradient flow, there are plenty of fruitful results in the long history of convex optimization \cite{polyak1987introduction, boyd2004convex, nocedal2006numerical, ruszczynski2011nonlinear, boyd2011distributed, shor2012minimization}. We can consider different optimization algorithms, e.g. the Heavy Ball method (HB) \cite{polyak1964some}, the Nesterov accelerated gradient descent (NAG) \cite{nesterov27method}, subgradient descent, Newton’s method, GD with multiple momentums \cite{pearlmutter1992gradient} and so on. Each optimization algorithm has a corresponding limiting ODE (see HB ODE \cite{polyak1964some} and NAG ODE \cite{su2014differential}), which is equivalent to the discrete optimization algorithm with an infintely small step size. To analyze such ODEs, we can apply the Gronwall's inequality \cite{gronwall1919note, bellman1943stability} for GD and the Lyapunov function or energy \cite{la2012stability,wilson2016lyapunov,polyak2017lyapunov} for higher order ODE (which is incurred by employing the momentum terms \cite{qian1999momentum, rutishauser1959theory}).

Our contribution is two-fold: we show that the non-convex weight dynamics has the same form as a strongly-convex error dynamics, where Lyapunov functions are applicable; we prove that HB also enjoys global linear convergence with a rate faster than GD, and NAG may converge at a sublinear rate yet requires more width than HB and GD.

\section{Preliminaries}
 
In this section, we introduce the NTK approach to analyze the convergence behavior of any neural networks from a dynamical system perspective. Particularly, we warm ourselves up with some known results of training a two-layer neural network \cite{du2018gradient}, using the MSE loss and the GD.

To start with, we do not specify the neural network architecture (e.g. layers, activation, depth, width, etc.). Given a training set $\{\bm{x}_i, y_i\}_{i=1}^{n}$ where $\bm{x}_i\in \mathbb{R}^{p}$, we denote the weights $\w_r\in\R^p$ as the weight vectors in the first hidden layer connecting to the $r$-th neuron, $\W$ as the union $\{\w_r\}$ and $\bm{a}$ as the set of weights in all the other layers. We write $f\left(\bm{W}, \bm{a}, \bm{x}_{i}\right)$ as the neural network output. We aim to minimize the MSE loss:
\begin{align*}
L(\bm{W}, \bm{a})=\frac{1}{2}\sum_{i=1}^{n} \left(f\left(\bm{W}, \bm{a}, \bm{x}_{i}\right)-y_{i}\right)^{2} 
\end{align*}
Taking the same route as in \cite{du2018gradient}, we focus on optimizing $\W$ with $\bm{a}$ fixed at initialization\footnote{In \Cref{sec:multi_layer} we extend our analysis to training all layers simultaneously, including the deep layers. We remark that training only the first layer is sufficient to find the global minimum of loss.}. Applying the simplest gradient descent with a step size $\eta$, we have:
\begin{align*}
\bm{w}_r(k+1)=\bm{w}_r(k)-\eta\frac{\partial L(\W(k),\bm{a})}{\partial \bm{w}_r(k)}
\end{align*}
Since GD is a discretization of its corresponding ordinary differential equation (ODE, known as the gradient flow), we analyze such ODE directly as an equivalent form of GD with an infinitesimal step size. The gradient flows are dynamical systems that are much amenable to analyze and understand different optimization algorithms. To be specific, GD has a gradient flow as 
\begin{align}
  \frac{d \bm{w}_{r}(t)}{d t}=&-\frac{\partial L(\W(t), \bm a)}{\partial \bm{w}_{r}(t)}
  \label{eq:GD flow}
\end{align}

\begin{rem}
Different optimization algorithms may have the same limiting ODE: it has been shown in \cite{hairer2006geometric} that the forward Euler discretization of \eqref{eq:GD flow} gives GD, while the backward Euler discretization gives the proximal point algorithm \cite{parikh2014proximal}. Another example relating Nesterov accelerated gradient and Heavy Ball can be found in \Cref{sec:HB}.
\end{rem}
Simple chain rules give the following dynamics,
\begin{align}
\textbf{Weight dynamics: }& \quad\quad\quad
\frac{d \bm{w}_{r}(t)}{d t}=-\frac{\partial L}{\partial \f(t)}  \frac{\partial f(t)}{\partial \bm{w}_r(t)}=-(\f-\y)\frac{\partial f(t)}{\partial \bm{w}_r(t)}
\label{eq:weight dynamics}
\\
\textbf{Prediction dynamics: }& \quad\quad\quad
\frac{d \f(t)}{d t}=  \sum_r\frac{\partial \f(t)}{\partial \bm{w}_{r}(t)}\frac{d \bm{w}_{r}(t)}{d t}= -\H(t)(\f-\y)
\label{eq:prediction dynamics}
\\
\textbf{Error dynamics: }& \quad\quad\quad
\dot{\Delta}(t) =  -\H(t)\Delta(t)
\label{eq:error dynamics}
\end{align}
Here we denote the error of the prediction $\Delta=\f-\y\in\R^n$ and the $\R^{n\times n}$ NTK matrix as 
\begin{align}
\H(t):=\sum_r\frac{\partial \f(t)}{\partial \bm{w}_{r}(t)}\left(\frac{\partial \f(t)}{\partial \bm{w}_{r}(t)}\right)^\top
\label{eq:NTK matrix}
\end{align}
which is the sum of outer products.

The key observation of training the over-parameterized neural networks is that $\W(t)$ stays very close to its initialization $\W(0)$, even though the loss may change largely. This phenomenon is well-known as `lazy training'\cite{chizat2019lazy, jacot2018neural}. As a consequence, the neural network $f$ is almost linear in $\W$ and the kernel $\H(t)$ behaves almost time-independently: $\lim_{m\to\infty}\H(0)\approx \H(0) \approx \H(t)$ \cite[Remark 3.1]{du2018gradient}.

Interestingly, suppose we define a pseudo-loss $\hat L(\Delta,\H):=\frac{1}{2}\Delta^\top \H\Delta$ and notice that $L=\frac{1}{2}\Delta^\top\Delta$, then optimizing the \textit{non-convex} loss $L$ over $\w_r$ leads to an error dynamics \eqref{eq:error dynamics}, as if we were actually optimizing a \textit{strongly-convex} loss $\hat L$ over $\Delta$ with the same dynamical system as \eqref{eq:GD flow}:
 \begin{align*}
\frac{d\Delta(t)}{dt}=-\frac{\partial \hat L(t)}{\partial \Delta(t)}.
\end{align*}
The matrix ODE \eqref{eq:error dynamics} with a constant and positive $\H$ has a solution converging to 0 at linear rate, as the classical theory on optimizing a strongly convex loss indicates. In other words, optimizing the non-convex $L$ for over-parameterized neural networks converges faster than optimizing convex losses and reaches the convergence speed of optimizing strongly convex losses.  Therefore it is essential to show that $\H$ is positive with the smallest eigenvalue bounded away from 0 at all time. We formalize this claim by quoting the results for the two-layer neural network of the following form,
\begin{align*}
f(\bm{W}, \bm{a}, \bm{x})=\frac{1}{\sqrt{m}} \sum_{r=1}^{m} {a}_{r} \sigma\left(\bm{w}_{r}^{\top} \bm{x}\right) 
\end{align*}
with $\sigma(z)=\max\{z,0\}$ being the ReLU activation function.
Now we quote an important fact that justifies our main theorem.

\begin{fact}[Assumption 3.1 and Theorem 3.1 in \cite{du2018gradient}]
\label{fact:gram}
Define matrix $\H^{\infty} \in\mathbb{R}^{n \times n}$ with 
\begin{align*}
(\H^{\infty})_{i j}=& \mathbb{E}_{\w_r \sim N(\mathbf{0}, \mathbf{I})}\left[\x_{i}^{\top} \x_{j} \mathbb{I}\left\{\w_r^{\top} \x_{i} \geq 0, \w_r^{\top} \x_{j} \geq 0\right\}\right]
= \left(\frac{1}{2}-\frac{\arccos(\bm x_i^\top\bm x_j)}{2 \pi}\right)(\bm x_i^\top \bm x_j)
\end{align*}
and define $\lambda_{0}:=\lambda_{\min }\left(\H^{\infty}\right)$ as the smallest eigenvalue of $\H^\infty$. Suppose for any $i\neq j$, $\x_{i} \nparallel \x_{j}, \text { then } \lambda_{0}>0$.
\end{fact}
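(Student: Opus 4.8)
The plan is to recognize $\H^\infty$ as a Gram matrix and then reduce the strict positivity of $\lambda_0$ to a linear-independence statement about an associated family of features. First I would introduce the vector-valued feature map $\psi_i(\w):=\x_i\,\mathbb{I}\{\w^\top\x_i\ge 0\}$, regarded as an element of the Hilbert space $L^2:=L^2(\R^p,N(\mathbf 0,\mathbf I);\R^p)$. By the very definition of $\H^\infty$ we then have $(\H^\infty)_{ij}=\langle\psi_i,\psi_j\rangle_{L^2}$, so that for every $c\in\R^n$,
\begin{align}
c^\top\H^\infty c=\Bigl\|\sum_{i=1}^n c_i\psi_i\Bigr\|_{L^2}^2=\mathbb{E}_{\w}\Bigl[\Bigl\|\sum_{i=1}^n c_i\x_i\,\mathbb{I}\{\w^\top\x_i\ge 0\}\Bigr\|^2\Bigr]\ge 0 .
\end{align}
This already yields $\H^\infty\succeq 0$ and hence $\lambda_0\ge 0$, and it shows that the strict bound $\lambda_0>0$ is \emph{equivalent} to the features $\{\psi_i\}_{i=1}^n$ being linearly independent in $L^2$.

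The core of the proof is therefore to show that $\sum_i c_i\psi_i=0$ in $L^2$ forces $c=0$. Since $N(\mathbf 0,\mathbf I)$ and Lebesgue measure on $\R^p$ are mutually absolutely continuous, this hypothesis says that the piecewise-constant function $g(\w):=\sum_i c_i\x_i\,\mathbb{I}\{\w^\top\x_i\ge 0\}$ vanishes for Lebesgue-almost every $\w$. I would analyze $g$ through the hyperplane arrangement $\{D_i\}$ with $D_i:=\{\w:\w^\top\x_i=0\}$: the only discontinuities of $g$ occur across these hyperplanes, and the jump of $g$ across $D_i$ is exactly $c_i\x_i$. Here the hypothesis enters decisively: $\x_i\nparallel\x_j$ forces $D_i\neq D_j$ (note this already requires $p\ge 2$ once $n\ge 2$), so $D_i\cap D_j$ is an affine subspace of dimension at most $p-2$ and is therefore negligible inside the $(p-1)$-dimensional $D_i$.

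Concretely, I would fix an index $i$ and choose a point $\w_0\in D_i\setminus\bigcup_{j\ne i}D_j$, which is possible and in fact of full $(p-1)$-measure in $D_i$ by the dimension count above. Picking $\varepsilon>0$ small enough that the ball $B(\w_0,\varepsilon)$ meets none of the other hyperplanes $D_j$, the hyperplane $D_i$ splits $B(\w_0,\varepsilon)$ into two halves $B^+$ and $B^-$ of positive Lebesgue measure, on which every indicator $\mathbb{I}\{\w^\top\x_j\ge 0\}$ with $j\ne i$ is constant while $\mathbb{I}\{\w^\top\x_i\ge 0\}$ toggles between $1$ and $0$. Hence $g|_{B^+}-g|_{B^-}=c_i\x_i$; but $g=0$ a.e.\ means $g$ vanishes on both positive-measure halves, forcing $c_i\x_i=\mathbf 0$ and thus $c_i=0$ (since $\x_i\neq\mathbf 0$). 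Ranging over all $i$ gives $c=0$, which establishes linear independence and therefore $\lambda_0>0$.

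I expect the geometric/measure-theoretic step of the previous paragraph to be the main obstacle: one must argue cleanly that a positive-measure region exists in which exactly one indicator flips, and that crossing it isolates the jump $c_i\x_i$. This is precisely where pairwise non-parallelism is indispensable—if two of the $\x_i$ were parallel, two hyperplanes would coincide, the corresponding features could collapse, and $\H^\infty$ could become singular. By contrast, the algebraic identity evaluating the entries to $\bigl(\tfrac12-\tfrac{\arccos(\x_i^\top\x_j)}{2\pi}\bigr)(\x_i^\top\x_j)$ is a routine Gaussian orthant-probability computation and is not needed for the positivity claim itself.
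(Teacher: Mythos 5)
Your proposal is correct. The paper does not prove this statement at all---it is quoted verbatim as a known fact from \cite{du2018gradient}---and your argument (realizing $\H^{\infty}$ as the Gram matrix of the features $\w\mapsto\x_i\,\mathbb{I}\{\w^\top\x_i\ge 0\}$ in the Gaussian $L^2$ space, then proving their linear independence by isolating the jump $c_i\x_i$ across the hyperplane $\{\w:\w^\top\x_i=0\}$ at a point avoiding all other hyperplanes) is essentially identical to the proof of Theorem 3.1 in that reference, so there is nothing to compare beyond noting that your reconstruction is the standard one.
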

Here $\H^\infty$ is the limiting form at the initialization, i.e, $\H^{\infty} = \lim_{m \to \infty} \H(0)$. We note that the explicit formula $(\H^{\infty})_{i j}= \left(\frac{1}{2}-\frac{\arccos(\bm x_i^\top\bm x_j)}{2 \pi}\right)(\bm x_i^\top \bm x_j)$ is given by \cite{cho2009kernel} and especially $(\H^{\infty})_{ii}=1/2$. In \cite{du2018gradient}, the authors establish that, for sufficiently wide hidden layer and under some data distributional assumptions, GD converges to zero training loss exponentially fast. Formally, 
\begin{theorem}[Theorem 3.2 and Lemma 3.2 in \cite{du2018gradient}]
\label{thm:simon}
Suppose $\forall i, \|\x_i\|_2=1$ and $|y_i|<C$ for some constant $C$, and only the hidden layer weights $\{\w_r\}$ are optimized by GD. If we set the width $m=\Omega(n^6/\lambda_0^4\delta^3)$ and we i.i.d. initialize $\bm{w}_{r} \sim \mathcal{N}(\mathbf{0}, \mathbf{I}), a_{r} \sim \operatorname{unif}\{-1,1\} $ for $ r \in[m], $ then with high
probability at least $ 1-\delta $ over the initialization, we have
$$\lambda_{\min}(\H(t))>\frac{1}{2}\lambda_{\min}(\lim_m\H(0)):=\frac{1}{2}\lambda_{\min}(\H^\infty)=\frac{\lambda_0}{2}$$
with $\H^\infty$ defined in \Cref{fact:gram}, and we have the linear convergence
\begin{align*}
L(t) \leq \exp \left(-\lambda_0 t\right)L(0).
\end{align*}
\end{theorem}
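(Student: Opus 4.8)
The plan is to run a continuity (bootstrapping) argument coupling three ingredients: the error dynamics \eqref{eq:error dynamics}, the concentration of $\H(0)$ around $\H^\infty$, and the stability of $\H(t)$ under small weight perturbations. The quantity I track throughout is the event $\{\lambda_{\min}(\H(s)) \ge \lambda_0/2\}$: once it holds on an interval, a quadratic-form estimate yields exponential decay of the loss, and conversely exponential decay keeps the weights near initialization, which keeps $\H(t)$ near $\H^\infty$ and re-establishes the eigenvalue bound. For the base case at $t=0$, note each entry $(\H(0))_{ij} = \frac1m\sum_r (\x_i^\top\x_j)\,\I\{\w_r^\top\x_i\ge 0,\ \w_r^\top\x_j\ge 0\}$ is an average of $m$ i.i.d.\ bounded terms with mean $(\H^\infty)_{ij}$, so Hoeffding plus a union bound over the $n^2$ entries gives $\norm{\H(0)-\H^\infty}_2 \le \lambda_0/4$ with probability $\ge 1-\delta/2$ once $m$ is large; by Weyl's inequality this forces $\lambda_{\min}(\H(0)) \ge 3\lambda_0/4 > \lambda_0/2$.

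On any interval where $\lambda_{\min}(\H(s)) \ge \lambda_0/2$, I would read the decay rate directly off \eqref{eq:error dynamics}: since $\frac{d}{dt}\norm{\Delta}^2 = -2\,\Delta^\top\H\Delta \le -\lambda_0\norm{\Delta}^2$, Gr\"onwall gives $\norm{\Delta(t)}^2 \le e^{-\lambda_0 t}\norm{\Delta(0)}^2$, which is exactly $L(t)\le e^{-\lambda_0 t}L(0)$. I then feed this back into the weight dynamics \eqref{eq:weight dynamics}. Using $\norm{\partial f_i/\partial\w_r}\le 1/\sqrt m$ for ReLU with $\norm{\x_i}=1$ and $|a_r|=1$, one obtains $\norm{\dot\w_r}\le \sqrt{n/m}\,\norm{\Delta}$, and integrating the exponential bound yields $\norm{\w_r(t)-\w_r(0)} \lesssim \frac{\sqrt n}{\sqrt m}\cdot\frac{\norm{\Delta(0)}}{\lambda_0} =: R$. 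Under $|y_i|<C$ we have $\norm{\Delta(0)}=O(\sqrt n)$, so $R=O\!\big(n/(\sqrt m\,\lambda_0)\big)$ uniformly in $r$.

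The hard part, and the real content, is the kernel-stability step: showing that $\norm{\w_r(t)-\w_r(0)}\le R$ for all $r$ implies $\norm{\H(t)-\H(0)}_2\le \lambda_0/4$. The obstacle is that $\w_r\mapsto\I\{\w_r^\top\x_i\ge 0\}$ is discontinuous, so a perturbation can flip a neuron's activation pattern. The fix is an anti-concentration estimate: a neuron flips on $\x_i$ only if $|\w_r(0)^\top\x_i|<R$, and since $\w_r(0)\sim N(\mathbf 0,\mathbf I)$ makes $\w_r(0)^\top\x_i$ standard Gaussian, this has probability $O(R)$. A concentration bound then shows at most $O(mR)$ neurons flip per data point, each perturbing a kernel entry by at most $1/m$, so that $\norm{\H(t)-\H(0)}_2\le\norm{\H(t)-\H(0)}_F \lesssim nR$. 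Requiring $nR\le\lambda_0/4$ is precisely what forces the width; the exact exponents $m=\Omega(n^6/\lambda_0^4\delta^3)$ come from chaining the failure probabilities of the concentration and anti-concentration bounds through the union bound and the Markov step controlling the number of flips.

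Finally I would close the loop by contradiction. Set $T=\sup\{t:\lambda_{\min}(\H(s))\ge\lambda_0/2\ \text{for all } s\le t\}$; by continuity of $t\mapsto\H(t)$ and the base case, $T>0$. On $[0,T)$ the decay and weight bounds hold, so the weights never leave the radius-$R$ ball, whence the stability step gives $\lambda_{\min}(\H(T))\ge\lambda_{\min}(\H(0))-\lambda_0/4\ge\lambda_0/2$. If $T<\infty$ this contradicts the maximality of $T$, so $T=\infty$, and both conclusions of the theorem hold for all $t$ on the high-probability event obtained by intersecting the initialization and stability events.
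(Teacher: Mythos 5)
Your proposal is correct and follows essentially the same route as the cited proof of this result (the paper itself quotes \Cref{thm:simon} from \cite{du2018gradient} and reuses its lemma structure as \Cref{lem:B1}, \Cref{lem:B2}, and \Cref{lem:B4}): concentration of $\H(0)$ about $\H^\infty$, anti-concentration of the activation patterns to get kernel stability under small weight perturbations, the conditional Gr\"onwall decay plus lazy-training bound, and a continuity/contradiction argument closing the bootstrap, with the width emerging from $R'<R$. The only sketch-level imprecisions are bookkeeping ones (e.g.\ $\norm{\Delta(0)}=O(\sqrt{n/\delta})$ by Markov rather than $O(\sqrt n)$, and the kernel perturbation bound carries an extra factor of $n/\delta$ from the Markov step on the flip count), which you correctly defer to the chaining of failure probabilities and which do not affect the argument.
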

We note that the NTK matrix \eqref{eq:NTK matrix} is the Gram matrix induced by the ReLU activation: 
\begin{align}
\H_{ij}(t)
=\sum_{r=1}^m \frac{\partial f_i(t)}{\partial \bm w_r}\left(\frac{\partial f_j(t)}{\partial \bm w_r}\right)^\top
=\frac{1}{m} \bm x_i^\top \bm x_j \sum_{r=1}^m \mathbb{I}(\bm w_r^\top \bm x_i\geq 0,\bm w_r^\top \bm x_j\geq 0)
\label{eq:H(t) definition}
\end{align}
We remark that the framework of \Cref{thm:simon} has been extended to training multiple layers simultaneously \cite{du2018gradient} (see also \Cref{sec:multi_layer}). The analysis has recently been generalized to different network architectures and losses and we now complement this line of research by extending to different optimization algorithms. To present the simplest proof, we only analyze the continuous gradient flows and we believe our approach can be easily extended to the discrete time analysis.

\section{Heavy Ball with Friction System}
\label{sec:HB}
 

Our first result concerns the GD with momentum, or the Heavy Ball (HB) method \cite{polyak1964some}:
\begin{equation}
    \begin{split}
        \bm w_r(k+1)=\bm w_r(k)-\eta\frac{\partial L(\W(k),\bm a)}{\partial \bm w_r(k)} +\beta(\bm w_r(k)-\bm w_r(k-1))
\label{eq:HB discrete}
\end{split}
\end{equation}
or equivalently
\begin{equation}
\begin{split}
\bm w_r(k+1) =& \bm w_r(k) + \eta \bm v(k)\\
\bm v(k)=& -\frac{\partial L(\W(k),\bm a)}{\partial \bm w_r(k)} + \beta \bm v(k-1)
\end{split}
\end{equation}
where $\eta$ is the step size and the momentum term $\beta\in[0,1]$. The corresponding gradient flow is known as the Heavy Ball with Friction (HBF) system. This is a non-linear dissipative dynamical system, originally proposed by \cite{polyak1964some} and heavily studied
in \cite{attouch2000heavy,gadat2018stochastic,cabot2009long, attouch2000heavy2,alvarez2002second,wilson2016lyapunov,loizou2020momentum,liu2020improved}: with $b>0$\footnote{Here we can see $b\sim\frac{1-\beta}{\sqrt{\eta}}$ by discretizing the HBF system.}
\begin{align}
\ddot{\bm w_r}(t)+b\dot{\bm w_r}(t)+\frac{\partial L(\W(t),\bm a)}{\partial \bm w_r(t)}=0.
\label{eq:heavy ball flow}
\end{align}
As shown later in \Cref{proof:HBF thm}, the error dynamics is
\begin{align}
\ddot{\Delta}(t)+b\dot{\Delta}(t)+\frac{\partial \hat L}{\partial \Delta(t)}\overset{\text{a.s.}}{=}0.
\label{eq:heavy ball error flow}
\end{align}
We note that other optimization algorithms may also correspond to the HBF system: for example NAG-SC (Nesterov accelerated gradient descent for strongly convex objective \cite{wilson2016lyapunov}, also see \Cref{sec:NAGSC}), though NAG-SC and HB are distinguishable using high resolution ODE \cite{shi2018understanding}.

In particular, we study the case as in \cite[Equation (7)]{wilson2016lyapunov} and \cite{siegel2019accelerated}, when $b=\sqrt{2\lambda_0}$, i.e. twice the strongly convexity of $\H(t)$:
\begin{align}
\ddot{\bm w_r}(t)+\sqrt{2\lambda_0}\dot{\bm w_r}(t)+\frac{\partial \f}{\partial \bm w_r}(\f-\y)=0
\label{eq:HB special b}
\end{align}
Our choice of parameter $b$ leads to a global linear convergence to zero training loss, without requiring Lipschitz gradients of $\hat L$. For other choices of parameters with the Lipschitz condition of $\hat L$, HB can enjoy linear converge locally \cite{polyak1964some,lessard2016analysis} and globally \cite{nesterov27method,ghadimi2015global,van2017fastest,siegel2019accelerated,aujol2020convergence}. 

To solve a second order ODE requires initial conditions on $\w_r$ and $\dot\w_r$, which we assume as $\dot\w_r(0)=0$ without loss of generality. Now we state the our main theorem under MSE loss.
\begin{theorem}\label{thm:HBF linear rate}
Suppose we set the width of the hidden layer $m=\Omega\left(\frac{n^6}{\delta^3\lambda_0^4}\right)$, $b=\sqrt{2\lambda_0}$ and we train with HB. If we i.i.d. initialize $\bm{w}_{r} \sim \mathcal{N}(\mathbf{0}, \mathbf{I}), a_{r} \sim \operatorname{unif}\{-1,1\} $ for $ r \in[m]$, then with high
probability at least $ 1-\delta $ over the initialization, we have
\begin{align*}
 L(t) &\leq\frac{4}{\lambda_0}\exp \left(-\sqrt{\lambda_0/2}\cdot t\right)\hat L(0)
\end{align*}
\end{theorem}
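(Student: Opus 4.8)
The plan is to reduce the weight dynamics \eqref{eq:heavy ball flow} to the error dynamics \eqref{eq:heavy ball error flow} and then run a Lyapunov (energy) argument tailored to the Heavy Ball with Friction system acting on the strongly convex pseudo-loss $\hat L(\Delta)=\tfrac12\Delta^\top\H\Delta$. Writing $\mu:=\lambda_0/2$, the choice $b=\sqrt{2\lambda_0}=2\sqrt{\mu}$ is precisely the damping matched to the strong-convexity modulus $\mu$ that is known to produce the $e^{-\sqrt{\mu}\,t}$ rate for such systems. By \Cref{thm:simon} we have $\lambda_{\min}(\H(t))>\lambda_0/2=\mu$ for all $t$ with probability at least $1-\delta$, so $\hat L$ is $\mu$-strongly convex along the trajectory, and the error dynamics reads $\ddot\Delta+2\sqrt{\mu}\,\dot\Delta+\H(t)\Delta=0$. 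The initial condition $\dot\w_r(0)=0$ propagates through \eqref{eq:prediction dynamics} to $\dot\Delta(0)=0$, which I will use to bound the initial energy.

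First I would define the energy $\mathcal{E}(t)=\hat L(\Delta(t))+\tfrac12\|\dot\Delta(t)+\sqrt{\mu}\,\Delta(t)\|^2$ and differentiate it along the flow. Substituting $\ddot\Delta=-2\sqrt{\mu}\,\dot\Delta-\H\Delta$ produces the cancellation $\Delta^\top\H\dot\Delta-\dot\Delta^\top\H\Delta=0$, leaving $\dot{\mathcal E}=-\sqrt{\mu}\,\|\dot\Delta\|^2-\mu\,\Delta^\top\dot\Delta-\sqrt{\mu}\,\Delta^\top\H\Delta+\tfrac12\Delta^\top\dot\H\Delta$. Comparing against $-\sqrt{\mu}\,\mathcal E$ and using $\Delta^\top\H\Delta\ge\mu\|\Delta\|^2$ to absorb the quadratic cross terms, everything cancels except a strictly negative margin and the kernel-drift term, giving $\dot{\mathcal E}+\sqrt{\mu}\,\mathcal E\le-\tfrac{\sqrt{\mu}}{2}\|\dot\Delta\|^2+\tfrac12\Delta^\top\dot\H\Delta$. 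Hence $\dot{\mathcal E}\le-\sqrt{\mu}\,\mathcal E$ as soon as the $\dot\H$ contribution is nonpositive or negligible, and Gronwall yields $\mathcal E(t)\le e^{-\sqrt{\mu}\,t}\mathcal E(0)$.

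The final conversion is bookkeeping on constants. On one hand $\mathcal E(t)\ge\hat L(\Delta(t))\ge\tfrac12\mu\|\Delta\|^2=\mu L(t)=\tfrac{\lambda_0}{2}L(t)$, so $L(t)\le\tfrac{2}{\lambda_0}\mathcal E(t)$. On the other hand, $\dot\Delta(0)=0$ gives $\mathcal E(0)=\hat L(0)+\tfrac{\mu}{2}\|\Delta(0)\|^2=\hat L(0)+\mu L(0)$, and since $\hat L(0)\ge\mu L(0)$ we obtain $\mathcal E(0)\le 2\hat L(0)$. Chaining these with the energy decay gives $L(t)\le\tfrac{2}{\lambda_0}e^{-\sqrt{\mu}\,t}\cdot 2\hat L(0)=\tfrac{4}{\lambda_0}e^{-\sqrt{\lambda_0/2}\,t}\hat L(0)$, matching the claim.

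The main obstacle is the time dependence of $\H(t)$, which enters twice. In the energy estimate it produces the term $\tfrac12\Delta^\top\dot\H\Delta$, which I would control through lazy training by bounding $\|\dot\H(t)\|$ via the small movement of the weights and showing it is dominated by the strictly negative $-\tfrac{\sqrt{\mu}}{2}\|\dot\Delta\|^2$ margin, possibly at the cost of an arbitrarily small loss in the exponent as $m\to\infty$. More fundamentally, the spectral bound $\lambda_{\min}(\H(t))>\lambda_0/2$ quoted from \Cref{thm:simon} was established for GD, so under the second-order HB flow it must be re-derived by a continuity/bootstrap argument that bounds the total path length $\int_0^t\|\dot\w_r(s)\|\,ds$ of each weight; this is more delicate than for GD because of the inertial velocity term. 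I expect this coupling — using the energy decay to control the weight movement, and the weight movement to keep $\H$ positive definite via \Cref{fact:gram} — to be the crux, and the width requirement $m=\Omega(n^6/(\delta^3\lambda_0^4))$ is exactly what makes the bootstrap close with the stated probability.
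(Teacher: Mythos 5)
Your proposal is correct and follows essentially the same route as the paper: the identical Lyapunov function $\hat L+\tfrac12\|\sqrt{\lambda_0/2}\,\Delta+\dot\Delta\|^2$, the same cancellation after substituting the error dynamics, the same Gronwall step and constant bookkeeping yielding $\tfrac{4}{\lambda_0}e^{-\sqrt{\lambda_0/2}\,t}\hat L(0)$, and the same bootstrap (energy decay $\Rightarrow$ small weight movement $R'<R$ $\Rightarrow$ $\lambda_{\min}(\H(t))\ge\lambda_0/2$) closed by the width requirement. The only divergence is that where you propose to control $\tfrac12\Delta^\top\dot\H\Delta$ via lazy training, the paper disposes of it directly by noting $\dot\H(t)\overset{\text{a.s.}}{=}0$ for the ReLU kernel \eqref{eq:H(t) definition}, since differentiating the indicators produces Dirac deltas.
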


We notice that indeed $\sqrt{\lambda_0/2}>\lambda_0$, suggesting a boost in the linear convergence rate of HB when compared to GD, as we observe in \Cref{fig:HB}. To prove this, we claim that $\lambda_0<1/2$ as tr$(\H^\infty)=n\cdot(\H^{\infty})_{i i}=n/2=\sum_i \lambda_i>n\lambda_0$, with $\lambda_i$ representing the eigenvalues of $\H^\infty$. Another observation is that, to guarantee the linear convergence, HB requires the same order of width as GD in \Cref{thm:simon}. In \Cref{proof:HBF thm}, we prove our theorem by employing the Lyapunov function.

\begin{figure}[!htb]
\centering
\includegraphics[width=8cm]{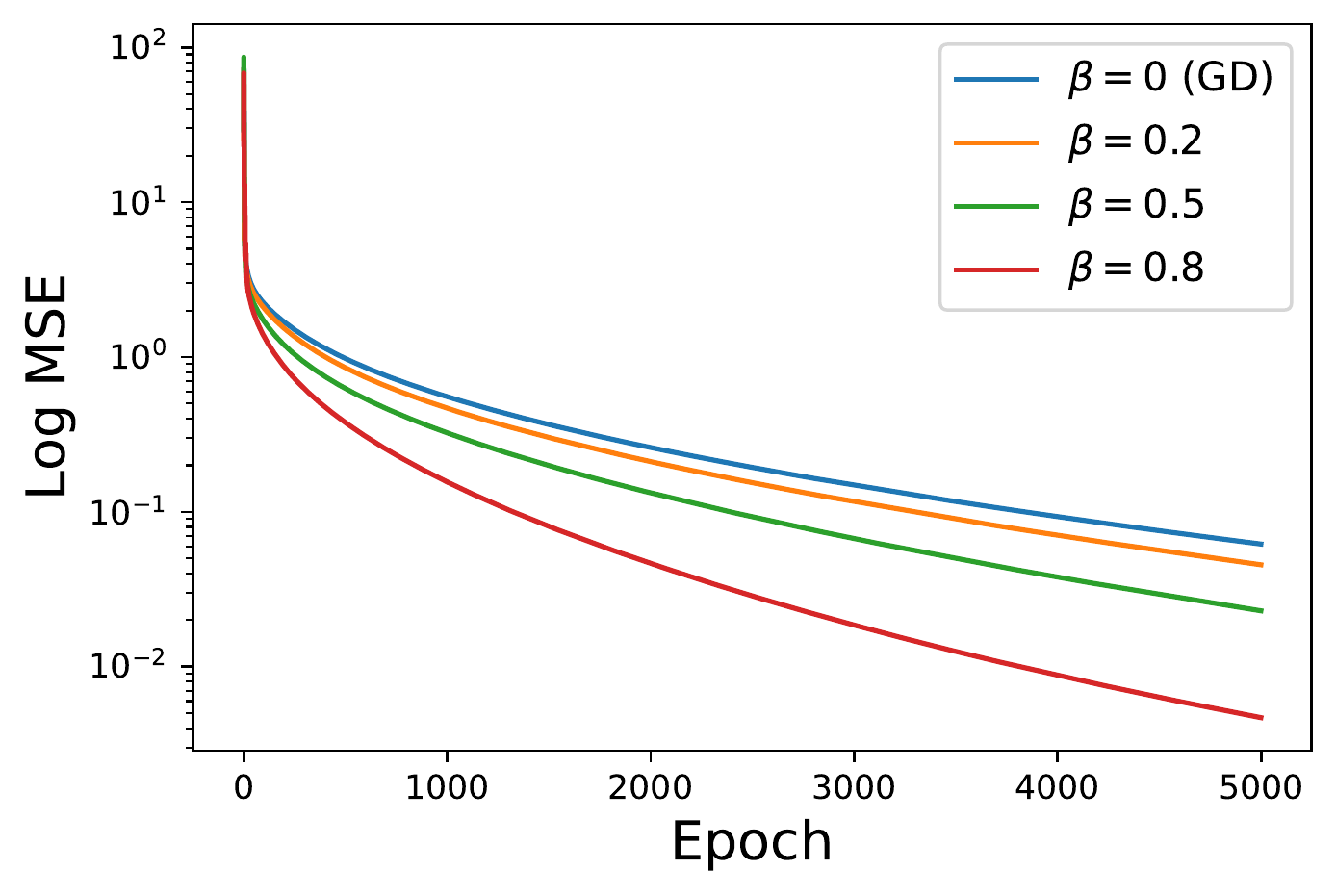}
\caption{Logarithm MSE vs. epoch of HB on MNIST. We observe that HB shows faster linear convergence than GD. Details of experiments are left in \Cref{exp:hb}.}
\label{fig:HB}
\end{figure}

\subsection{Proof of \Cref{thm:HBF linear rate}}\label{proof:HBF thm}
First, we use the chain rule to characterize the prediction dynamics of $f$,
\begin{align*}
\dot{f}_i(t)&=\sum_{r\in[m]}\frac{\partial f_i}{\partial \w_r}\dot{\w_r}
\\
\ddot{f}_i(t)&=\sum_{r,l\in[m]}\dot{\w_r}^\top\frac{\partial^2 f_i}{\partial \w_r\partial \w_l}\dot{\w_l}+\sum_{r\in[m]}\frac{\partial f_i}{\partial \w_r}\ddot{\w_r}\overset{\text{a.s.}}{=}\sum_{r\in[m]}\frac{\partial f_i}{\partial \w_r}\ddot{\w_r}
\end{align*}
where the last equality follows from a key observation that, with $\delta(\cdot)$ denoting the Dirac Delta function,
\begin{align}
\begin{split}
\frac{\partial f_i}{\partial \bm w_r}&=\frac{1}{\sqrt{m}}a_r\bm x_i\mathbb{I}(\bm w_r^\top \bm x_i>0),
\\
\frac{\partial^2 f_i}{\partial \bm w_r^2}&=\frac{1}{\sqrt{m}}a_r\bm x_i^\top\bm x_i\delta(\bm w_r^\top \bm x_i)\overset{\text{a.s.}}{=} 0,
\\
\frac{\partial^2 f_i}{\partial \bm w_r\partial \bm w_l}&=\mathbf{0} \text{\quad for $l\neq r$}.
\end{split}
\label{eq:second derivative zero}
\end{align}
Multiplying $\frac{\partial \f}{\partial \bm{w}_r}$ to \eqref{eq:HB special b} and sum over $r$, we obtain the prediction dynamics as
\begin{align*}
\ddot{\f}(t)+\sqrt{2\lambda_0}\dot{\f}(t)+\H(t)(\f-\bm{y})\overset{\text{a.s.}}{=} 0
\end{align*}
and consequently, the dynamics of the error is
\begin{align}
\ddot{\Delta}(t)+\sqrt{2\lambda_0}\dot{\Delta}(t)+\H(t)\Delta(t)\overset{\text{a.s.}}{=}0
\label{eq:deltadyn}
\end{align}
or in an analogous form to \eqref{eq:HB special b},
\begin{align*}
\ddot{\Delta}(t)+\sqrt{2\lambda_0}\dot{\Delta}(t)+\frac{\partial \hat L}{\partial \Delta(t)}\overset{\text{a.s.}}{=} 0
\end{align*}
In what follows, we drop the \textit{a.s.} and focus on the ordinary differential equations. To establish the linear convergence in MSE, we need to guarantee $\H(t)$ is positive definite with $\lambda_{\min}(\H(t))\geq\lambda_0/2$ for all $t$. In other words, the pseudo loss $\hat L$ is $\frac{\lambda_0}{2}$-strongly convex. We start with $t=0$, by showing that for wide enough neural networks, $\H(0)$ has a positive smallest eigenvalue with high probability. \begin{lemma}[Lemma 3.1 in \cite{du2018gradient}]
If $m=\Omega\left(\frac{n^2}{\lambda_{0}^{2}} \log \left(\frac{n^2}{\delta}\right)\right)$, then we have $\left\|\H(0)-\H^{\infty}\right\|_{2} \leq \frac{\lambda_{0}}{4}$ and $\lambda_{\min }(\H(0)) \geq \frac{3}{4} \lambda_{0}$ with probability of at least $1-\delta$.
\label{lem:B1}
\end{lemma}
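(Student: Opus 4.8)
The plan is to exploit the fact that each entry of $\H(0)$ is an empirical average of $m$ independent and identically distributed bounded random variables whose expectation is exactly the corresponding entry of $\H^\infty$, then to pass from an entrywise concentration bound to an operator-norm bound via the Frobenius norm, and to finish with a Weyl-type eigenvalue perturbation.

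First I would fix a pair of indices $(i,j)$ and write, from \eqref{eq:H(t) definition}, $\H_{ij}(0)=\frac{1}{m}\sum_{r=1}^m Z_r$ where $Z_r := \x_i^\top\x_j\,\I(\w_r^\top\x_i\geq 0,\w_r^\top\x_j\geq 0)$ are i.i.d.\ across $r$ with $\mathbb{E}[Z_r]=(\H^\infty)_{ij}$ by the definition in \Cref{fact:gram}. Since $\|\x_i\|_2=\|\x_j\|_2=1$, Cauchy--Schwarz gives $\abs{\x_i^\top\x_j}\leq 1$, so each $Z_r$ takes values in $\{0,\x_i^\top\x_j\}$, an interval of length at most $1$. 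Hoeffding's inequality then yields, for every $\epsilon>0$,
\[
\P\!\left(\abs{\H_{ij}(0)-(\H^\infty)_{ij}}>\epsilon\right)\leq 2\exp(-2m\epsilon^2).
\]

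Next I would take a union bound over all $n^2$ entries and choose $\epsilon=\frac{\lambda_0}{4n}$, so that off a failure event every entry obeys $\abs{\H_{ij}(0)-(\H^\infty)_{ij}}\leq\frac{\lambda_0}{4n}$. The failure probability is at most $2n^2\exp\!\big(-m\lambda_0^2/(8n^2)\big)$, which is $\leq\delta$ precisely when $m\geq\frac{8n^2}{\lambda_0^2}\log\frac{2n^2}{\delta}$, matching the stated width requirement $m=\Omega\!\big(\frac{n^2}{\lambda_0^2}\log\frac{n^2}{\delta}\big)$. On this event the operator norm is controlled by the Frobenius norm,
\[
\norm{\H(0)-\H^\infty}_2\leq\norm{\H(0)-\H^\infty}_F\leq\sqrt{n^2\cdot\tfrac{\lambda_0^2}{16n^2}}=\frac{\lambda_0}{4},
\]
which is the first claim. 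Finally, Weyl's inequality gives $\lambda_{\min}(\H(0))\geq\lambda_{\min}(\H^\infty)-\norm{\H(0)-\H^\infty}_2\geq\lambda_0-\frac{\lambda_0}{4}=\frac{3}{4}\lambda_0$, establishing the second claim.

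There is no deep obstacle here: the argument is a routine concentration-plus-perturbation estimate. The only points requiring care are the bookkeeping in matching the chosen tolerance $\epsilon=\lambda_0/(4n)$ to the target operator-norm bound $\lambda_0/4$ through the crude inequality $\norm{\cdot}_2\leq\norm{\cdot}_F$, and verifying that the induced width requirement is exactly the hypothesized one. One could sharpen constants by applying a matrix Bernstein or matrix Hoeffding inequality directly to the sum of the independent rank contributions rather than going entrywise, but the elementary route above already suffices.
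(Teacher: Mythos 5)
Your proposal is correct, and it follows essentially the same route as the proof of this lemma in the cited source \cite{du2018gradient} (the present paper quotes the result without reproving it): entrywise Hoeffding concentration for $\H_{ij}(0)$ around $(\H^{\infty})_{ij}$, a union bound over the $n^{2}$ entries, passage to the operator norm via $\norm{\cdot}_{2}\leq\norm{\cdot}_{F}$, and a Weyl-type eigenvalue perturbation to conclude $\lambda_{\min}(\H(0))\geq\frac{3}{4}\lambda_{0}$. The bookkeeping with $\epsilon=\lambda_{0}/(4n)$ and the resulting width requirement $m=\Omega\bigl(\frac{n^{2}}{\lambda_{0}^{2}}\log\frac{n^{2}}{\delta}\bigr)$ is accurate, so nothing further is needed.
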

Next, we introduce a lemma that shows for any $t$, if $\w_r(t)$ is close to $\w_r(0)$, then $\H(t)$ is close to $\H(0)$. Together with \Cref{lem:B1}, $\lambda_{\min}(\H(t))$ always has a positive smallest eigenvalue. In words, the lazy training leads to the positive definiteness.
\begin{lemma}[Lemma 3.2 in \cite{du2018gradient}]
If $\bm{w}_r$ are i.i.d. generated from $\mathcal{N}(\bm{0},\bm{I})$ for $r \in [m]$, and $\left\|\bm{w}_{r}(0)-\bm{w}_{r}\right\|_{2} \leq \frac{c \delta \lambda_{0}}{n^2} =: R$ for some small positive constant $c$, then the following holds with probability at least $1-\delta$: we have $\|\bm{H}(t)-\bm{H}(0)\|_2<\frac{\lambda_0}{4}$ and $\lambda_{\min}(\bm{H}(t))>\frac{\lambda_0}{2}$.
\label{lem:B2}
\end{lemma}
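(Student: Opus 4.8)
The plan is to bound the Gram matrix perturbation entrywise and then convert this into an operator-norm bound, which yields the eigenvalue statement via Weyl's inequality together with \Cref{lem:B1}. The crucial observation, reading off \eqref{eq:H(t) definition}, is that each entry $\H_{ij}$ depends on the weights only through the sign patterns $\I(\w_r^\top \x_i \ge 0)$. Hence the only way that moving from $\w_r(0)$ to the perturbed weight $\w_r(t)$ can change $\H_{ij}$ is if some neuron $r$ \emph{flips} its activation on example $i$ or on example $j$; all other neurons contribute identically to $\H_{ij}(t)$ and $\H_{ij}(0)$.

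First I would quantify when a flip can occur. Since $\norm{\x_i}_2 = 1$, any $\w$ with $\norm{\w - \w_r(0)}_2 \le R$ satisfies $|\w^\top \x_i - \w_r(0)^\top \x_i| \le R$, so $\I(\w^\top\x_i \ge 0)$ can differ from $\I(\w_r(0)^\top\x_i \ge 0)$ only on the event
$$A_{ir} := \left\{\, |\w_r(0)^\top \x_i| < R \,\right\}.$$
Because $\w_r(0) \sim \mathcal{N}(\bm{0},\bm{I})$ and $\norm{\x_i}_2 = 1$, the pre-activation $\w_r(0)^\top \x_i$ is standard Gaussian, and anti-concentration of the Gaussian density gives $\P(A_{ir}) \le 2R/\sqrt{2\pi}$. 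Using $|\x_i^\top \x_j| \le 1$ in \eqref{eq:H(t) definition}, the entrywise difference is controlled by the number of flipped neurons,
$$\abs{\H_{ij}(t) - \H_{ij}(0)} \le \frac{1}{m}\sum_{r=1}^m \I(A_{ir} \cup A_{jr}) \le \frac{1}{m}\sum_{r=1}^m \big(\I(A_{ir}) + \I(A_{jr})\big),$$
whose expectation is at most $4R/\sqrt{2\pi}$ for each pair $(i,j)$.

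Next I would sum over all $n^2$ entries and apply Markov's inequality, so that with probability at least $1-\delta$,
$$\sum_{i,j}\abs{\H_{ij}(t)-\H_{ij}(0)} \le \frac{4 n^2 R}{\delta\sqrt{2\pi}}.$$
Passing to the operator norm through $\norm{\H(t)-\H(0)}_2 \le \norm{\H(t)-\H(0)}_F \le \sum_{i,j}\abs{\H_{ij}(t)-\H_{ij}(0)}$ and substituting $R = c\delta\lambda_0/n^2$ collapses this bound to $4c\lambda_0/\sqrt{2\pi}$, which is strictly below $\lambda_0/4$ once $c < \sqrt{2\pi}/16$. Finally, Weyl's inequality combined with $\lambda_{\min}(\H(0)) \ge \tfrac{3}{4}\lambda_0$ from \Cref{lem:B1} yields $\lambda_{\min}(\H(t)) \ge \tfrac{3}{4}\lambda_0 - \tfrac{1}{4}\lambda_0 = \tfrac{1}{2}\lambda_0$, completing both conclusions.

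The main obstacle is not any single estimate but the trade-off hidden in the radius $R$: the entrywise expectation scales linearly in $R$, yet there are $n^2$ entries and the Markov step costs a factor $1/\delta$, so $R$ must shrink like $\delta\lambda_0/n^2$ to absorb both simultaneously. Tracking these $n^2$ and $\delta$ factors cleanly — rather than the anti-concentration step, which is elementary — is the delicate part, and it is exactly what dictates the closeness radius stated in the hypothesis. One could sharpen the $1/\delta$ loss by replacing Markov with a Bernstein or Hoeffding bound on the sums of independent indicators, but Markov already suffices for the claimed constants.
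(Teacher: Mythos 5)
Your proof is correct and is essentially the same argument as the one in \cite{du2018gradient} (Lemma 3.2), which this paper simply quotes without reproving: activation flips only on the anti-concentration event $|\w_r(0)^\top\x_i|<R$, entrywise expectation bounds plus Markov over the $n^2$ entries, Frobenius-to-operator norm, then Weyl with \Cref{lem:B1}. No gaps; the constant bookkeeping ($R=c\delta\lambda_0/n^2$ absorbing both the $n^2$ entries and the $1/\delta$ Markov loss) matches the source.
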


The next lemma gives two important facts given that $\lambda_{\min}(\H(s))$ for previous time $s\leq t$: the loss decays exponentially and weights stay close to their initialization at the current time $t$. In other words, the positive definiteness indicates the convergence of the loss, which further indicates the lazy training. We emphasize that our \Cref{lem:B3} is specific to the choice of optimization algorithms and hence the proof is much different than its analog in \cite[Lemma 3.3]{du2018gradient} for GD.

\begin{lemma}
Assume $0\leq s \leq t$ and $\lambda_{\min}(\bm{H}(s))\geq \frac{\lambda_0}{2}$. Then $L(t) \leq \frac{4}{\lambda_0}\exp \left(-\sqrt{\lambda_0/2}t\right) \hat L(0)$ and $\left\|\bm{w}_{r}(t)-\bm{w}_{r}(0)\right\|_{2} \leq \sqrt{\frac{256 n\hat L(0)}{9m\lambda_0^3}} =: R^{\prime}$.
\label{lem:B3}
\end{lemma}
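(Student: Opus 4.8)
The plan is to derive both claims from a single energy (Lyapunov) argument applied to the second-order error dynamics \eqref{eq:deltadyn}, namely $\ddot\Delta + \sqrt{2\lambda_0}\,\dot\Delta + \H(t)\Delta = 0$, using the hypothesis $\lambda_{\min}(\H(s))\ge \lambda_0/2$ for all $s\le t$. Writing $\mu := \lambda_0/2$, the friction coefficient is exactly $\sqrt{2\lambda_0}=2\sqrt\mu$, the critically-damped regime of \cite{wilson2016lyapunov}, so I would take the Lyapunov function
\[
\mathcal{E}(t) := \hat L(t) + \tfrac{1}{2}\bigl\|\dot\Delta(t) + \sqrt\mu\,\Delta(t)\bigr\|^2,
\qquad \hat L(t)=\tfrac12\Delta^\top\H(t)\Delta .
\]
Differentiating along the trajectory, the one delicate point is that $\hat L$ carries the time-varying $\H(t)$; but $\H$ depends on $\W$ only through the indicators $\I(\w_r^\top\x_i>0)$, so it is piecewise constant and $\dot\H\overset{\text{a.s.}}{=}0$ — the same Dirac-delta vanishing already recorded in \eqref{eq:second derivative zero} — letting me treat $\H$ as locally constant and write $\dot{\hat L}=\Delta^\top\H\dot\Delta$. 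Substituting $\ddot\Delta=-2\sqrt\mu\,\dot\Delta-\H\Delta$ and collecting terms, the cross-terms cancel and I get the identity $\dot{\mathcal E}+\sqrt\mu\,\mathcal E = -\tfrac{\sqrt\mu}{2}\|\dot\Delta\|^2+\sqrt\mu\bigl(\tfrac{\mu}{2}\|\Delta\|^2-\hat L\bigr)$. Strong convexity $\hat L\ge\tfrac{\mu}{2}\|\Delta\|^2$ (precisely where $\lambda_{\min}(\H)\ge\mu$ is used) makes the parenthesis nonpositive, so $\dot{\mathcal E}\le-\sqrt\mu\,\mathcal E$ and hence $\mathcal E(t)\le e^{-\sqrt\mu t}\mathcal E(0)$.

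The first claim follows by evaluating the two ends. The initial condition $\dot\w_r(0)=0$ forces $\dot\Delta(0)=0$, so $\mathcal E(0)=\hat L(0)+\tfrac{\mu}{2}\|\Delta(0)\|^2\le 2\hat L(0)$, again by strong convexity. At time $t$, discarding the nonnegative kinetic term gives $\mathcal E(t)\ge\hat L(t)\ge\mu L(t)$, whence $L(t)\le\tfrac1\mu\mathcal E(t)\le\tfrac2\mu e^{-\sqrt\mu t}\hat L(0)=\tfrac{4}{\lambda_0}\exp(-\sqrt{\lambda_0/2}\,t)\hat L(0)$, exactly as stated.

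For the weight bound — the part that genuinely departs from the first-order GD argument of \cite{du2018gradient}, as the text emphasizes — I would solve the second-order weight ODE for the velocity. Reading $\ddot\w_r+\sqrt{2\lambda_0}\,\dot\w_r=-\partial L/\partial\w_r$ as a linear first-order equation in $\dot\w_r$ with $\dot\w_r(0)=0$, variation of parameters yields $\dot\w_r(t)=-\int_0^t e^{-\sqrt{2\lambda_0}(t-s)}\,\tfrac{\partial L}{\partial\w_r}(s)\,ds$. Since $\bigl\|\partial L/\partial\w_r\bigr\|=\tfrac1{\sqrt m}\bigl\|\sum_i a_r\Delta_i\x_i\I(\w_r^\top\x_i>0)\bigr\|\le\tfrac{\sqrt n}{\sqrt m}\|\Delta\|=\tfrac{\sqrt{2n}}{\sqrt m}\sqrt{L}$, I would bound $\|\w_r(t)-\w_r(0)\|\le\int_0^\infty\|\dot\w_r(\tau)\|\,d\tau$, interchange the order of integration (Fubini), and insert the decay $\sqrt{L(s)}\le\tfrac{2}{\sqrt{\lambda_0}}e^{-\frac12\sqrt{\lambda_0/2}\,s}\sqrt{\hat L(0)}$ just established; evaluating the two resulting geometric integrals produces a bound of the stated order $\sqrt{n\hat L(0)/(m\lambda_0^3)}$.

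I expect the main obstacle to be this weight-movement estimate rather than the energy step. The convolution integral has to be tracked carefully through the change in order of integration to land on the precise constant $256/9$, and one must confirm that the velocity representation remains valid even though the forcing $\partial L/\partial\w_r$ is only piecewise smooth in $t$ (the indicators again). The Lyapunov computation, by contrast, is essentially mechanical once $\mathcal E$ is written down; its only subtlety is the almost-sure vanishing of $\dot\H$, which the paper's recurring \emph{a.s.} annotations already signal.
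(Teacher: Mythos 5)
Your proposal is correct and follows essentially the same route as the paper: the Lyapunov function $\mathcal{E}(t)=\hat L(t)+\tfrac12\|\dot\Delta+\sqrt{\lambda_0/2}\,\Delta\|^2$ is exactly the paper's $V(t)$, the decay $\dot V\le-\sqrt{\lambda_0/2}\,V$ is obtained by the same cancellation and strong-convexity step (with $\dot\H\overset{\text{a.s.}}{=}0$), and the weight bound comes from the same integrating-factor representation $\dot\w_r(t)=-\int_0^t e^{-\sqrt{2\lambda_0}(t-s)}\,\partial L/\partial\w_r(s)\,ds$ combined with the just-proved decay of $\sqrt{L(s)}$. The only cosmetic difference is that the paper bounds $\|\dot\w_r(t)\|$ pointwise in $t$ and then integrates (yielding the constant $256/9$) rather than invoking Fubini on the double integral.
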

\begin{proof}[Proof of \Cref{lem:B3}]
Borrowing the idea of \cite{siegel2019accelerated}, we define the Lyapunov function or Lyapunov energy as
\begin{align*}
V(t):=&\hat L+\frac{1}{2}\left\|\sqrt{\frac{\lambda_0}{2}}\Delta(t)+\dot\Delta(t)\right\|^2
=\frac{1}{2}\Delta(t)^\top\H(t)\Delta(t)+\frac{1}{2}\left\|\sqrt{\frac{\lambda_0}{2}}\Delta(t)+\dot\Delta(t)\right\|^2.
\end{align*}
The Lyapunov function represents the total energy of the system and always decreases along the trajectory of the training dynamics since, as we will later show, $\dot V(t)< 0$. Here we simplify the notation by denoting the dependence on $t$ in the subscript and use $\alpha:=b/2=\sqrt{\lambda_0/2}$. We derive by the chain rule,
\begin{align*}
\dot V(t)&=\dot{\Delta}_{t}^\top\H(t)\Delta_t+\frac{1}{2}\Delta_t^\top\dot\H(t)\Delta_t+\left\langle{\alpha} \dot{\Delta}_{t} +\ddot{\Delta}_{t}, {\alpha}\Delta_{t}+\dot\Delta_{t}\right\rangle,
\end{align*}
where we use $\langle\bm u,\bm v\rangle=\bm u^\top\bm v$ to denote the inner product.

Notice that by \eqref{eq:H(t) definition}, we have $\dot\H(t)\overset{\text{a.s.}}{=}0$. Substituting the error dynamics \eqref{eq:deltadyn} for $\ddot\Delta_t$, we have
\begin{align*}
\dot V(t)&=\left\langle\H\Delta_t, \dot\Delta_{t}\right\rangle+\left\langle-{\alpha} \dot\Delta_{t}-\H\Delta_{t}, {\alpha}\Delta_{t}+\dot\Delta_t\right\rangle
\\
&=-{\alpha}\left\langle\H\Delta_{t},\Delta_{t}\right\rangle-\alpha^2\left\langle \dot\Delta_{t},\Delta_t\right\rangle-{\alpha}\left\langle \dot\Delta_{t}, \dot\Delta_{t}\right\rangle
\end{align*}
Using $\lambda_{\min}(\H)\geq\frac{\lambda_0}{2}=\alpha^2$, we get
\begin{align*}
\left\langle\H\Delta_{t},\Delta_{t}\right\rangle \geq\frac{1}{2}\left\langle\H\Delta_{t},\Delta_{t}\right\rangle+\frac{\alpha^2}{2}\langle\Delta_{t},\Delta_{t}\rangle
=\hat L(t)+\frac{\alpha^2}{2}\langle\Delta_{t},\Delta_{t}\rangle
\end{align*}
and hence we have
\begin{align*}
\dot V(t)=&-{\alpha}\hat L(t)-{\frac{\alpha^3}{2}}\langle\Delta_{t},\Delta_{t}\rangle-\alpha^2\left\langle \dot\Delta_{t},\Delta_t\right\rangle-{\alpha}\left\langle \dot\Delta_{t}, \dot\Delta_{t}\right\rangle
\\
<&-{\alpha}\left(\hat L(t)+\frac{1}{2}\left\|{\alpha}\Delta_{t}+\dot\Delta_{t}\right\|^2\right)=-{\alpha}V(t)
\end{align*}
where in the last inequality we throw away $-\frac{{\alpha}}{2}\left\langle \dot\Delta_{t}, \dot\Delta_{t}\right\rangle$. Clearly $\dot V(t)< 0$ for all $t$ since $V(t)$ is by definition positive. For this first order scalar ODE, we apply the Gronwall's inequality to derive that
\begin{align*}
V(t)<e^{-{\alpha}t}V(0)
\end{align*}
and we obtain
\begin{align*}
\hat L(t) &\leq V(t)<e^{-{\alpha}t}V(0)
=e^{-{\alpha}t}\left(\frac{1}{2}\Delta(0)^\top\H(0)\Delta(0)+\frac{\alpha^2}{2}\|\Delta(0)\|^2\right).
\end{align*}
Again using $\lambda_{\min}(\H(0))\geq\alpha^2$, we have
\begin{align*}
 \hat L(t) &\leq  \exp \left(-{\alpha} t\right)\left( \frac{1}{2} \Delta(0)^\top\H(0)\Delta(0) +  \hat{L}(0) \right)
 = 2\exp \left(-{\alpha} t\right)\hat L(0)
\end{align*}
and 
$$L(t)\leq\frac{2}{\alpha^2}\exp \left(-\alpha t\right)\hat L(0).$$

In words, the prediction $ \f(t) \rightarrow \mathbf{y} $ exponentially fast, with a convergence factor $\alpha=\sqrt{\lambda_0/2}$. 

Now we move on to show that $\w_r(t)$ stays close to $\w_r(0)$. Multiplying $e^{bt}=e^{2\alpha t}$ to the weight dynamics \eqref{eq:HB special b}, we have
\begin{align*}
\frac{d}{dt}\left(e^{2\alpha t}\dot \w_r\right)=-\frac{1}{\sqrt{m}}e^{2\alpha t}a_r\sum_i(f_i-y_i)\x_i \I(\w_r^\top\x_i\geq 0)
\end{align*}
which gives a close-form solution
\begin{align*}
\dot \w_r=-e^{-2\alpha t}\int_0^t\frac{1}{\sqrt{m}}e^{2\alpha s}a_r\sum_i(f_i-y_i)\x_i \I(\w_r^\top\x_i\geq 0)ds
\end{align*}
whose norm satisfies
\begin{align*}
\|\dot \w_r(t)\|
&\leq e^{-2\alpha t}\frac{1}{\sqrt{m}}\int_0^t e^{2\alpha s}\sum_i|f_i(s)-y_i|ds\\
&\leq e^{-2\alpha t}\sqrt{\frac{n}{m}}\int_0^t e^{2\alpha s}\|\f(s)-\y\|_2 ds\\
&=\sqrt{\frac{n}{m}}\int_0^t e^{2\alpha (s-t)}\sqrt{L(s)}ds\\
&\leq \sqrt{\frac{2n\hat L(0)}{m\alpha^2}}\int_0^t e^{\frac{3}{2}\alpha s-2\alpha t}ds\\
&\leq \sqrt{\frac{8n\hat L(0)}{9m\alpha^4}}e^{-\frac{\alpha}{2} t}
\end{align*}
Finally by Cauchy Schwarz,
we bound the weight distance from initialization,
\begin{align*}
 &\left\|\w_{r}(t)-\w_{r}(0)\right\|_{2}
 \leq \int_{0}^{t}\left\|\dot\w_{r}(s)\right\|_{2} d s < \sqrt{\frac{32n\hat L(0)}{9m\alpha^6}}=\sqrt{\frac{256 n\hat L(0)}{9m\lambda_0^3}}
\end{align*}
\end{proof}

We quote the next lemma to show that $R'<R$ indicates that for all $t>0$, the conditions in \Cref{lem:B2} and \Cref{lem:B3} hold. This lemma is closely related to \cite[Lemma 3.4]{du2018gradient} and the proof is given in \Cref{3442}.
\begin{lemma}
If $R'< R$, then we have $\lambda_{\min}(\bm{H}(t))\geq \frac{\lambda_0}{2}$, for all $r\in [m]$, $\|\bm{w}_r(t)-\bm{w}_r(0)\|_2 \leq R'$ and  $L(t) \leq \frac{4}{\lambda_0}\exp \left(-\sqrt{\lambda_0/2}t\right)\hat L(0)$.
\label{lem:B4}
\end{lemma}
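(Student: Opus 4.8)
The plan is to close the loop between \Cref{lem:B2} and \Cref{lem:B3} with a continuity (bootstrap) argument. These two lemmas point in opposite directions: \Cref{lem:B2} converts a weight bound $\|\w_r(t)-\w_r(0)\|_2\le R$ into the spectral bound $\lambda_{\min}(\H(t))\ge\lambda_0/2$, whereas \Cref{lem:B3} converts a spectral bound holding throughout $[0,t]$ into the \emph{tighter} weight bound $\|\w_r(t)-\w_r(0)\|_2\le R'$ together with the claimed loss decay. The strict gap $R'<R$ is precisely what allows these implications to be iterated indefinitely without the two bounds ever becoming inconsistent.

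First I would record that each $\w_r(t)$ is continuous in $t$, being the position component of the solution to the second-order HBF ODE \eqref{eq:HB special b} with initial data $\w_r(0)$ and $\dot\w_r(0)=0$; hence $t\mapsto\|\w_r(t)-\w_r(0)\|_2$ is continuous and vanishes at $t=0$. Then I would define the first escape time
\[
T:=\inf\{t\ge 0:\ \|\w_r(t)-\w_r(0)\|_2> R\ \text{for some } r\in[m]\},
\]
with $T=\infty$ if the set is empty. The goal is $T=\infty$, which yields all three conclusions at once: the weight bound $\|\w_r(t)-\w_r(0)\|_2\le R$ then holds for every $t$, so \Cref{lem:B2} gives $\lambda_{\min}(\H(t))\ge\lambda_0/2$ for all $t$, and with this hypothesis now met on all of $[0,t]$, \Cref{lem:B3} delivers $\|\w_r(t)-\w_r(0)\|_2\le R'$ and $L(t)\le\frac{4}{\lambda_0}\exp(-\sqrt{\lambda_0/2}\,t)\hat L(0)$.

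Next I would rule out $T<\infty$ by contradiction. If $T<\infty$, then by continuity of the weights and the definition of the infimum we have $\|\w_r(t)-\w_r(0)\|_2\le R$ for every $t\in[0,T]$ and every $r$, with equality attained at $T$ for at least one index $r^\ast$. Applying \Cref{lem:B2} pointwise at each $s\in[0,T]$ gives $\lambda_{\min}(\H(s))\ge\lambda_0/2$ throughout $[0,T]$, which is exactly the hypothesis of \Cref{lem:B3} at time $T$; that lemma then returns $\|\w_r(T)-\w_r(0)\|_2\le R'$ for all $r$. Since $R'<R$, this contradicts $\|\w_{r^\ast}(T)-\w_{r^\ast}(0)\|_2=R$, forcing $T=\infty$ and completing the argument.

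I expect the only delicate point to be the regularity needed to run the continuity argument cleanly. The right-hand side of \eqref{eq:HB special b} contains the indicator factors $\I(\w_r^\top\x_i\ge 0)$, so $\H(t)$ is merely piecewise constant and $\ddot\w_r$ can jump at the (generically discrete) times where some $\w_r^\top\x_i$ changes sign. This does not obstruct the proof, because \Cref{lem:B2} is invoked pointwise in $s$ and needs only continuity of $\w_r(t)$ itself---guaranteed by integrating the ODE---rather than continuity of $\H$. One should also verify that the high-probability events underlying \Cref{lem:B1,lem:B2} are determined by the random initialization and hold simultaneously, so that conditioning on them renders the deterministic bootstrap above valid for all $t$.
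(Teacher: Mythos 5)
Your proof is correct and takes essentially the same route as the paper's: both argue by contradiction via the first escape time at which some $\|\w_r(t)-\w_r(0)\|_2$ reaches $R$, then combine \Cref{lem:B2} (weight bound implies $\lambda_{\min}(\H)\geq\lambda_0/2$) with \Cref{lem:B3} (that spectral bound on $[0,t]$ implies the strictly tighter weight bound $R'<R$) to obtain a contradiction. Your write-up is, if anything, a cleaner rendering of the paper's somewhat terse two-case argument.
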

Finally we study the width requirement for $R'<R$ to hold true, i.e. we need
$\sqrt{\frac{256n\hat L(0)}{9m\lambda_0^3}}<O(\frac{\delta\lambda_0}{n^2})$ which is equivalent to $m=\Omega(\frac{n^6}{\delta^3\lambda_0^4})$. This is shown in \Cref{width HB}.

\section{Nesterov Accelerated Gradient Method}

In this section, we analyze the dynamics of the generalized Nesterov Accelerated Gradient (NAG) descent as follows,
\begin{align}
\bm v(k+1)&=\w_r(k)-\eta\frac{\partial L(\W(k),\bm a)}{\partial \bm w_r(k)}
\label{eq:NAG discrete}
\\
\w_r(k+1)&=\bm v(k+1)+\frac{k-1}{k+\gamma-1}(\bm v(k+1)-\bm v(k))
\nonumber
\end{align}
where $\eta$ is step size. We remark that the NAG \eqref{eq:NAG discrete} has a time-dependent momentum coefficient $\frac{k-1}{k+\gamma-1}$ while in practice, e.g. in \cite{sutskever2013importance} (adopted in Pytorch and Tensorflow), a time-independent momentum is commonly used. In fact, a special case of NAG with time-independent momentum is analyzed in \Cref{sec:NAGSC}. It has been given in \cite{su2014differential} that the corresponding gradient flow as
\begin{align}
\ddot{\bm w_r}(t)+\frac{\gamma}{t}\dot{\bm w_r}(t)+\frac{\partial L(\W(t),\bm a)}{\partial \bm w_r(t)}=0.
\label{eq:NAG flow}
\end{align}
With initial conditions $\dot{\bm w_r}(0)=0$, we multiply $\frac{\partial \f}{\partial \bm{w}_r}$ to (\ref{eq:NAG flow}) and sum over $r$. It follows from \eqref{eq:second derivative zero} that the prediction dynamics and the error dynamics are
\begin{align}
\ddot{\f}(t)+\frac{\gamma}{t}\dot{\f}(t)+\H(t)(\f-\y)&\overset{\text{a.s.}}{=}0
\\
\ddot{\Delta}(t)+\frac{\gamma}{t}\dot{\Delta}(t)+\H(t)\Delta(t)&\overset{\text{a.s.}}{=}0
\end{align}
with the same NTK matrix $\H(t)$ as defined in (\ref{eq:H(t) definition}). Again, using the pseudo-loss $\hat L(t)$, we have an error dynamics that is analogous to the weight dynamics \eqref{eq:NAG flow},
\begin{align}
\ddot{\Delta}(t)+\frac{\gamma}{t}\dot{\Delta}(t)+\frac{\partial \hat L}{\partial\Delta(t)}&\overset{\text{a.s.}}{=}0.
\label{eq:NAG error dynamics}
\end{align}
We now state our convergence analysis of NAG based on this error dynamics.
\begin{theorem}\label{thm:Nesterov rate}
Suppose we set the width of the hidden layer $m=\Omega\left(\frac{n^{5\alpha/2 - 4}}{\delta^{3\alpha/2 - 3} \lambda_0^{3\alpha/2 - 2}}\right)$ where $4 < \alpha \leq \frac{2\gamma}{3}$ and $\gamma> 6$, and we train with NAG. If we i.i.d. initialize $\bm{w}_{r} \sim \mathcal{N}(\mathbf{0}, \mathbf{I}), a_{r} \sim \operatorname{unif}\{-1,1\} $ for $ r \in[m]$, then with high
probability at least $ 1-\delta $ over the initialization, we have
\begin{align}
 L(t) \leq  A(\alpha,\gamma,\lambda_0)t^{-\alpha}   {L}(0) 
\end{align}
where $A(\alpha,\gamma,\lambda_0)$ is a constant that only depends on $\alpha$, $\gamma$ and $\lambda_0$, defined in \eqref{eq:A definition}. 
\end{theorem}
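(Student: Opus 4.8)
The plan is to mirror the bootstrap-plus-Lyapunov architecture of the proof of \Cref{thm:HBF linear rate}, but to replace the constant-friction energy by a \emph{time-weighted} Lyapunov function adapted to the vanishing damping $\gamma/t$. As in the Heavy Ball case, the entire argument is conditional on $\lambda_{\min}(\H(t))\ge \lambda_0/2$ holding for all $t$, which we obtain from \Cref{lem:B1} and \Cref{lem:B2} once we can show the weights never leave the ball of radius $R$ around initialization. Hence the real content is twofold: (i) a polynomial decay rate $\hat L(t)\lesssim t^{-\alpha}$ \emph{assuming} the strong-convexity bound, and (ii) a matching control of the weight displacement that closes the bootstrap.

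First I would record a cheap global bound. Define the un-weighted energy $E(t):=\hat L(\Delta(t))+\tfrac12\|\dot\Delta(t)\|^2$. Using $\dot\H(t)\overset{\text{a.s.}}{=}0$ together with the error dynamics \eqref{eq:NAG error dynamics}, a one-line computation gives $\dot E(t)=-\tfrac{\gamma}{t}\|\dot\Delta(t)\|^2\le 0$, so with $\dot\Delta(0)=0$ we obtain $\hat L(t)\le E(t)\le \hat L(0)$ for every $t$. This monotonicity already controls $\hat L$ (and, via $L\le\tfrac{2}{\lambda_0}\hat L$, the loss) on any initial interval, before the asymptotic regime sets in. To upgrade this to the rate $t^{-\alpha}$ I would introduce the time-weighted Lyapunov function suggested by the self-similar scaling $\Delta(t)\sim t^{-\alpha/2}$, $\dot\Delta(t)\sim t^{-\alpha/2-1}$, namely
\[
V(t):=t^{\alpha}\hat L(\Delta(t))+\tfrac12\bigl\|\,c\,t^{\alpha/2}\Delta(t)+t^{\alpha/2+1}\dot\Delta(t)\,\bigr\|^2 ,
\]
with the constant $c$ to be fixed. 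Every summand is $O(1)$ under the expected scaling and $V(t)\ge t^{\alpha}\hat L(t)$. Differentiating, substituting \eqref{eq:NAG error dynamics} for $\ddot\Delta$, dropping $\dot\H$, and using $\hat L=\tfrac12\Delta^\top\H\Delta$ together with $\langle\H\Delta,\Delta\rangle\ge\tfrac{\lambda_0}{2}\|\Delta\|^2$, the cross terms in $\dot V$ organize into a negative form precisely when $\alpha\le 2\gamma/3$ — this is where the upper bound on $\alpha$ in the statement is consumed. Since every term of $V$ carries a positive power of $t$, $V$ rises from $V(0)=0$, so $\dot V\le 0$ can hold only for $t\ge t_0$ for some threshold; I would bound $V(t_0)$ in terms of $\hat L(0)$ using the global estimate of the previous step, giving $\hat L(t)\le V(t_0)\,t^{-\alpha}$ for $t\ge t_0$. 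Splicing the two regimes and converting $\hat L(0)$ to $L(0)$ through $\hat L(0)\le\lambda_{\max}(\H(0))L(0)$ produces the claimed bound with a constant $A(\alpha,\gamma,\lambda_0)$.

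Finally, for the bootstrap I would solve \eqref{eq:NAG flow} with the integrating factor $t^{\gamma}$, giving $\dot\w_r(t)=-t^{-\gamma}\int_0^t s^{\gamma}\,\tfrac{\partial L}{\partial \w_r}(s)\,ds$ and hence $\|\dot\w_r(t)\|\le \sqrt{n/m}\,t^{-\gamma}\int_0^t s^{\gamma}\sqrt{2L(s)}\,ds$. Feeding in $L(s)\lesssim s^{-\alpha}$ on $[t_0,t]$ (and the bounded value on $[0,t_0]$) yields $\|\dot\w_r(t)\|\lesssim t^{-\alpha/2+1}$, so $\|\w_r(t)-\w_r(0)\|\le\int_0^\infty\|\dot\w_r(s)\|\,ds$ is finite \emph{precisely} when $-\alpha/2+1<-1$, i.e. $\alpha>4$; together with $\alpha\le 2\gamma/3$ this forces $\gamma>6$. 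Naming the resulting radius $R'$ and imposing $R'<R$ as in the analog of \Cref{lem:B4} then gives the stated width $m=\Omega(\cdot)$. The main obstacle is the Lyapunov step: unlike the constant-friction HB case, the vanishing damping $\gamma/t$ delivers only polynomial decay, so the inequality $\dot V\le 0$ does not close for free — the time-dependent weights and the constant $c$ must be tuned so that the indefinite cross terms are dominated, and it is exactly this balancing that selects the admissible window $4<\alpha\le 2\gamma/3$. A secondary technical point is the degeneracy of $V$ and the singularity of $\gamma/t$ at $t=0$, which is why the global energy bound $\hat L(t)\le\hat L(0)$ is needed to seed the estimate.
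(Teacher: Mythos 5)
Your overall architecture is exactly the paper's: condition on $\lambda_{\min}(\H(t))\ge\lambda_0/2$ via \Cref{lem:B1} and \Cref{lem:B2}, prove a polynomial decay of $\hat L$ with a time-weighted Lyapunov function, control $\|\w_r(t)-\w_r(0)\|$ by solving \eqref{eq:NAG flow} with the integrating factor $t^\gamma$, close the bootstrap with an analog of \Cref{lem:B4}, and read off the width from $R'<R$. Your identification of where $\alpha>4$ comes from (integrability of $t^{1-\alpha/2}$ at infinity) and why $\gamma>6$ is needed (so that the window $4<\alpha\le 2\gamma/3$ is nonempty) matches the paper, and your handling of the singular region near $t=0$ (a uniform bound on $L$ there, giving $\|\dot\w_r\|=O(t)$) is an acceptable substitute for the paper's device, which splits the integral at $\epsilon$ and bounds $\int_0^\epsilon\|\dot\w_r\|$ by $\sqrt{2L(0)}\,\epsilon$ using the decreasing energy $\mathcal{E}(t)=L+\tfrac12\sum_r\|\dot\w_r\|^2$.

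The genuine gap is in the rate estimate. The paper does not re-derive the $t^{-\alpha}$ decay: it invokes Theorem 8 of \cite{su2014differential} (restated in \Cref{app:su nesterov}) for the $\tfrac{\lambda_0}{2}$-strongly convex pseudo-loss $\hat L$, obtaining $\hat L(t)\le C(\alpha,\gamma)\|\Delta(0)\|^2/\bigl((\lambda_0/2)^{\alpha/2-1}t^{\alpha}\bigr)$. Your sketch asserts that after tuning $c$ and the time weights, $\dot V\le 0$ holds for $t\ge t_0$ and the result follows by monotonicity. This will not close as described: when you differentiate the weights $t^{\alpha}$ and $t^{\alpha/2}$ with $\alpha>2$, you pick up a \emph{positive} term of order $t^{\alpha-3}\|\Delta+c't\dot\Delta\|^2$ that the negative terms generated by the damping $\gamma/t$ cannot absorb on their own; the known proof controls it by repeatedly invoking $\mu$-strong convexity to trade $\|\Delta\|^2$ for $\tfrac{2}{\mu}\hat L$, effectively an induction in $\alpha$ in steps of two. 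That mechanism is precisely what produces the factor $\mu^{-(\alpha/2-1)}=(\lambda_0/2)^{-(\alpha/2-1)}$ in the constant; a clean monotone-energy argument would yield a $\lambda_0$-independent constant, which is inconsistent with the actual bound. So either cite the result as the paper does, or reproduce its inductive argument — the one-shot balancing you describe is the step that fails. A minor secondary point: you convert $\hat L(0)$ to $L(0)$ via $\hat L(0)\le\lambda_{\max}(\H(0))L(0)$, which injects a dependence on $\lambda_{\max}(\H(0))=O(n)$ into the constant; the cited theorem is stated directly in terms of $\|\Delta(0)\|^2=2L(0)$, which is how the paper keeps $A(\alpha,\gamma,\lambda_0)$ free of $n$.
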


We pause here to discuss the choice of $\gamma$ in \eqref{eq:NAG discrete}. In \cite{su2014differential}, the `magic constant' $\gamma$ has been extensively studied. When $\gamma\geq 3$, the convergence rate is shown to be $O(t^{-\frac{2\gamma}{3}})$. When $\gamma<3$, there exist counter-examples that fail the desired $O(1/t^2)$ convergence rate. We remark that we only need $\gamma>3$ to derive the convergence but we assume $\gamma>6$ only to guarantee the lazy training $\w_r(t)\approx \w_r(0)$. 

From \Cref{thm:Nesterov rate}, NAG only converges at polynomial rate, in contrast to the linear convergence of HB and GD. This can be visualized as the linear pattern in \Cref{fig:NAG} (note the GD pattern is concave). Therefore, in the long run when $t$ is sufficiently large, NAG may be outperformed by GD. In addition, NAG and HB are well-known to have non-monotone loss dynamics, which is different than GD.

\begin{figure}[!htb]
\centering
\includegraphics[width=8cm]{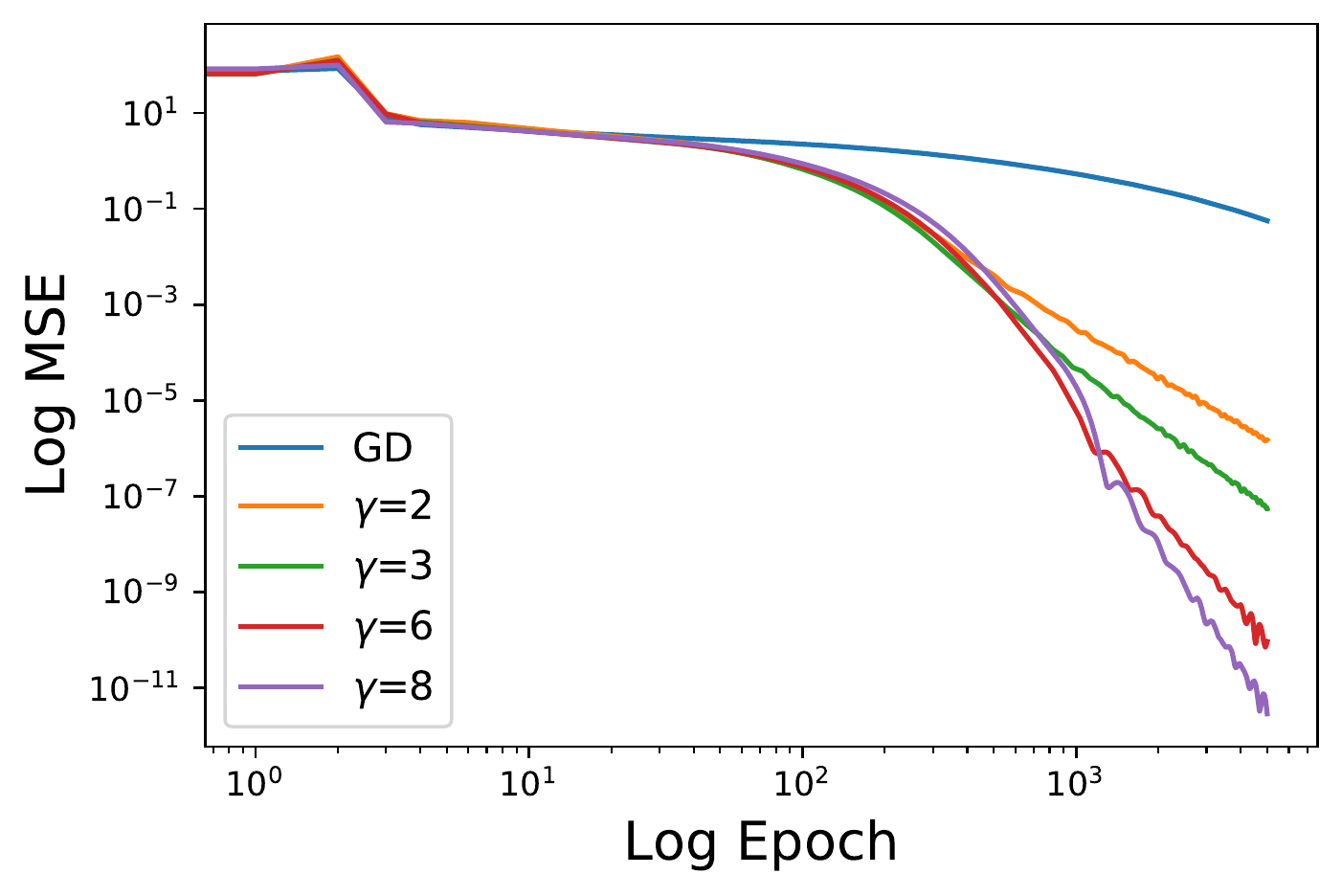}
\caption{Logarithm MSE vs. logarithm epoch of NAG on MNIST. We observe that NAG shows polynomial decay in loss but converges faster than GD with learning rate $5\times 10^{-5}$. Details of experiments are left in \Cref{exp:nag}.}
\label{fig:NAG}
\end{figure}

\subsection{Proof of \Cref{thm:Nesterov rate}}
To prove \Cref{thm:Nesterov rate}, we use the same framework as in  \Cref{sec:HB}: given \Cref{lem:B1} and \Cref{lem:B2}, we will prove \Cref{lem:B3+} as an analogy to \Cref{lem:B3}, but customized for NAG.



\begin{lemma} \label{lem:B3+}
Assume $0 \leq s \leq t$ and $\lambda_{\min} (\bm{H}(s) ) \geq \frac{\lambda_0}{2}$. Then we have $L(t) \leq \frac{C(\alpha,\gamma)}{t^\alpha (\lambda_0/2)^{\frac{\alpha}{2}}}  {L}(0)$ for $2 \leq \alpha \leq \frac{2\gamma}{3}$ and some $C(\alpha,\gamma)$ only depending on $\alpha$ and $\gamma$. Furthermore, if $\alpha>4$, we have 
\begin{align*} 
\|\w_{r}(t)-\w_{r}(0)\|_{2}\leq R'(\epsilon) :=\frac{ 2\epsilon^{2 - \alpha/2} }{\alpha - 4} \sqrt{\frac{nC(\alpha,\gamma){L}(0)}{m  (\lambda_0/2)^{\frac{\alpha}{2}}(\gamma - \alpha/2 + 1)^2}  }  + \sqrt{2L(0)} \epsilon.
\end{align*}
\end{lemma}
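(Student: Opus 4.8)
I would mirror the Heavy Ball argument of \Cref{lem:B3}, replacing the constant-friction Lyapunov function by the vanishing-friction energy of \cite{su2014differential}. Under the hypothesis $\lambda_{\min}(\H(s)) \ge \lambda_0/2$ for $s \le t$, together with $\dot\H \overset{\text{a.s.}}{=} 0$ from \eqref{eq:H(t) definition}, the error solves the linear ODE $\ddot\Delta + \frac{\gamma}{t}\dot\Delta + \H\Delta = 0$ with $\H$ a fixed matrix obeying $\H \succeq \mu\I$, where $\mu := \lambda_0/2$. Thus $\hat L(\Delta)=\frac12\Delta^\top\H\Delta$ is a $\mu$-strongly convex quadratic minimized at $\Delta=\bm 0$, and $\dot\w_r(0)=0$ forces $\dot\Delta(0)=0$.

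\textbf{Loss rate.} First I would prove the polynomial decay of $L$. Since the delicate quantity is the strong-convexity scale $\mu$, I rescale time by $\tau = \sqrt\mu\,t$, which turns the dynamics into $\Delta'' + \frac{\gamma}{\tau}\Delta' + \mu^{-1}\H\Delta = 0$ with $\mu^{-1}\H \succeq \I$; this rescaling is exactly what injects the factor $(\lambda_0/2)^{\alpha/2}=\mu^{\alpha/2}$ into the final bound. On the normalized system I apply the generalized Su--Boyd--Cand\`es energy $\mathcal E(\tau) = \tau^\alpha\,\tilde{\hat L}(\Delta) + \tfrac12\norm{\,\cdots\,}^2$ built from $\tilde{\hat L}=\mu^{-1}\hat L$ and $\tau\Delta'$, and verify $\dot{\mathcal E}(\tau)\le 0$; the admissible exponent range $\alpha \le 2\gamma/3$ is precisely the condition under which the friction $\gamma/\tau$ dominates the indefinite cross terms, so this step is inherited essentially verbatim from \cite{su2014differential}. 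Because the objective appears only with the weight $\tau^\alpha$ (which vanishes at $\tau=0$) and $\dot\Delta(0)=0$, the initial energy collapses to $\mathcal E(0) = (\text{const})\norm{\Delta(0)}^2 = C(\alpha,\gamma)\,L(0)$, with no dependence on $\lambda_{\max}(\H)$. Integrating $\dot{\mathcal E}\le 0$ gives $\tau^\alpha\tilde{\hat L}(\tau) \le \mathcal E(0)$, and since $L=\frac12\norm{\Delta}^2\le\tilde{\hat L}$ by $\mu^{-1}\H\succeq\I$, undoing $\tau=\sqrt\mu\,t$ yields $L(t)\le C(\alpha,\gamma)\,\mu^{-\alpha/2}t^{-\alpha}L(0)$, the claimed rate.

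\textbf{Weight displacement.} For the second statement I integrate the weight ODE \eqref{eq:NAG flow}. Multiplying by the integrating factor $t^\gamma$ and using $\dot\w_r(0)=0$ gives $t^\gamma\dot\w_r(t) = -\int_0^t s^\gamma\,\frac{\partial\f}{\partial\w_r}(\f(s)-\y)\,ds$. Bounding $\norm{\partial f_i/\partial\w_r}\le 1/\sqrt m$ from \eqref{eq:second derivative zero}, summing over $i$ by Cauchy--Schwarz, using $\norm{\f-\y}=\sqrt{2L}$ and inserting the loss rate, I obtain $\norm{\dot\w_r(t)} \le \frac{1}{\gamma-\alpha/2+1}\sqrt{\tfrac{2n\,C(\alpha,\gamma)L(0)}{m\,\mu^{\alpha/2}}}\;t^{1-\alpha/2}$ (the inner integral $\int_0^t s^{\gamma-\alpha/2}ds$ converges since $\alpha\le 2\gamma/3<2\gamma+2$). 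The displacement is then $\norm{\w_r(t)-\w_r(0)}\le\int_0^t\norm{\dot\w_r(s)}\,ds$, which reduces to controlling $\int_0^t s^{1-\alpha/2}\,ds$.

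\textbf{Main obstacle.} The crux, and the reason the bound carries a free parameter $\epsilon$ and the restriction $\alpha>4$, is that $\int_0^t s^{1-\alpha/2}\,ds$ has a non-integrable singularity at $s=0$ once $\alpha>4$: the sharp rate $L(s)\le C\mu^{-\alpha/2}s^{-\alpha}L(0)$ is vacuous near the origin and cannot be integrated there. I would therefore split $\int_0^t=\int_0^\epsilon+\int_\epsilon^t$. On $[\epsilon,t]$ the polynomial velocity bound is integrable and $\int_\epsilon^t s^{1-\alpha/2}ds\le\frac{2}{\alpha-4}\epsilon^{2-\alpha/2}$, which produces the first term of $R'(\epsilon)$. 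On $[0,\epsilon]$ I discard the sharp rate and instead use a crude, non-singular velocity estimate valid near initialization (where the loss is still comparable to $L(0)$), contributing the lower-order term linear in $\epsilon$, namely $\sqrt{2L(0)}\,\epsilon$. The parameter $\epsilon$ is left free here and is afterwards optimized against the radius $R$ of \Cref{lem:B2} in the width calculation, exactly as $R'<R$ was enforced for Heavy Ball.
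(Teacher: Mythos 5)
Your overall route is the paper's route: invoke the Su--Boyd--Cand\`es Lyapunov energy (their Theorem 8, restated in the appendix) for the $t^{-\alpha}\mu^{-\alpha/2}$ loss rate with $\mu=\lambda_0/2$, integrate the weight ODE \eqref{eq:NAG flow} against the factor $t^\gamma$ to get $\|\dot\w_r(t)\|=O(t^{1-\alpha/2})$, and split $\int_0^t\|\dot\w_r\|$ at $\epsilon$ to handle the non-integrable singularity when $\alpha>4$. Your time-rescaling explanation of where $\mu^{\alpha/2}$ comes from is a reasonable gloss on the citation, and your $[\epsilon,t]$ computation matches the paper's $\frac{2\epsilon^{2-\alpha/2}}{\alpha-4}$ term.

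The one genuine gap is the $[0,\epsilon]$ piece. You assert a ``crude, non-singular velocity estimate valid near initialization (where the loss is still comparable to $L(0)$)'' yielding $\sqrt{2L(0)}\,\epsilon$, but you give no mechanism, and the heuristic you offer is not the right one: the NAG loss is non-monotone, so ``$L(s)$ comparable to $L(0)$ near $t=0$'' is itself something to be proved, and even granted it does not directly convert into the uniform velocity bound $\|\dot\w_r(s)\|\le\sqrt{2L(0)}$ that produces the stated constant. The paper closes this step with a second, mechanical-energy Lyapunov function $\mathcal{E}(t)=L(\W(t),\a)+\frac{1}{2}\sum_r\|\dot\w_r(t)\|^2$, for which $\dot{\mathcal{E}}(t)=-\sum_r\frac{\gamma}{t}\|\dot\w_r(t)\|^2\le 0$ along \eqref{eq:NAG flow}; hence $\frac{1}{2}\sum_r\|\dot\w_r(t)\|^2\le\mathcal{E}(0)=L(0)$ for \emph{all} $t$ (using $\dot\w_r(0)=0$), which gives both the uniform velocity bound and, as a byproduct, the loss bound you wanted to assume. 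Without this (or an equivalent dissipation argument) the $\sqrt{2L(0)}\,\epsilon$ term is unjustified; with it, your proof coincides with the paper's.
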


\begin{proof}[Proof of \Cref{lem:B3+}]
We define the Lyapunov function as in \cite{su2014differential}
\begin{align*}
V(t; \alpha, \gamma):=t^{\alpha}\hat{L}(t)+\frac{(2 \gamma-\alpha)^{2} t^{\alpha-2}}{8}\left\|\Delta_t+\frac{2 t}{2 \gamma-\alpha} \dot{\Delta}_t\right\|^{2}
\end{align*}
and use the same loss $L$ and pseudo-loss $\hat L$ as before.

For $\gamma > 3$ and $2 \leq \alpha \leq \frac{2\gamma}{3}$, we obtain from \cite[Theorem 8]{su2014differential} (restated in \Cref{app:su nesterov}), with some $C(\alpha,\gamma)$ only depending on $\alpha$ and $\gamma$, the error dynamics \eqref{eq:NAG error dynamics} leads to
\begin{align*}
    \hat L(t) \leq \frac{C(\alpha,\gamma)}{t^\alpha (\lambda_0/2)^{\frac{\alpha}{2}-1}}L(0):=\frac{A(\alpha,\gamma,\lambda_0)}{t^\alpha (\lambda_0/2)^{-1}}L(0).
\end{align*}
Leveraging $\lambda_{\min}(\H)\geq\frac{\lambda_0}{2}$, we have $\hat L(t)\geq\frac{\lambda_0}{2}L(t)$ and
\begin{align}
    L(t) \leq\frac{2}{\lambda_0}\hat L(t)
    \leq \frac{C(\alpha,\gamma)}{ t^\alpha (\lambda_0/2)^{\frac{\alpha}{2}}}L(0).
\label{eq:A definition}
\end{align}
We denote $A(\alpha,\gamma,\lambda_0):=\frac{C(\alpha,\gamma)}{ (\lambda_0/2)^{\frac{\alpha}{2}}}$ for brevity of statements in \Cref{thm:Nesterov rate}. 

As for the lazy training, since the weight dynamics is
\begin{align*}
    \ddot{\bm w_r}(t)+\frac{\gamma}{t}\dot{\bm w_r}(t) + \frac{\partial \f}{\partial \bm w_r}(\f - \y) = 0,
\end{align*}
multiplying both sides with $t^\gamma$, we obtain
\begin{align*}
    \frac{d}{dt} (t^\gamma \dot{\bm w}_r(t)) = - t^\gamma \frac{\partial \f}{\partial \bm w_r}(\f - \y)    =-\frac{1}{\sqrt{m}}t^\gamma a_r\sum_i(f_i-y_i)\x_i \I(\w_r^\top\x_i\geq 0) 
\end{align*}
which gives a closed-form solution
\begin{align*}
\dot \w_r=- \frac{1}{t^\gamma}\int_0^t\frac{1}{\sqrt{m}}s^\gamma a_r\sum_i(f_i-y_i)\x_i \I(\w_r^\top\x_i\geq 0)ds
\end{align*}
whose norm satisfies
\begin{align*}
\|\dot \w_r(t)\|_2
&\leq \frac{1}{t^\gamma\sqrt{m}}\int_0^t s^\gamma\sum_i|f_i(s)-y_i|ds
\\
&\leq \frac{1}{t^\gamma} \sqrt{\frac{n}{m}}\int_0^t s^\gamma\|\f(s)-\y\|_2 ds
\\
&=\frac{1}{t^\gamma}\sqrt{\frac{n}{m}}\int_0^t s^\gamma\sqrt{L(s)}ds
\\
&\leq \frac{1}{t^\gamma} \sqrt{\frac{nC(\alpha,\gamma)L(0)}{m   (\lambda_0/2)^{\frac{\alpha}{2}}}  } \int_0^t s^{\gamma - \alpha/2} ds \\
&=t^{1 - \alpha/2  } \sqrt{\frac{nC(\alpha,\gamma)L(0)}{m  (\lambda_0/2)^{\frac{\alpha}{2}}(\gamma - \alpha/2 + 1)^2}  }.  
\end{align*}
Note $t^{1-\alpha/2}$ is not defined at $t=0$. Next we bound the weight distance by breaking the integral into two pieces, for any $0 < \epsilon < t$:
\begin{align*}
 \left\|\w_{r}(t)-\w_{r}(0)\right\|_{2} 
 &\leq \int_{0}^{t}\left\|\dot\w_{r}(s)\right\|_{2} d s
 = \int_{\epsilon}^{t}\left\|\dot\w_{r}(s)\right\|_{2} d s + \int_{0}^{\epsilon}\left\|\dot\w_{r}(s)\right\|_{2} d s \\
 &\leq \frac{ 2\epsilon^{2 - \alpha/2} }{\alpha - 4} \sqrt{\frac{nC(\alpha,\gamma){L}(0)}{m  (\lambda_0/2)^{\frac{\alpha}{2}}(\gamma - \alpha/2 + 1)^2}  }+ \int_{0}^{\epsilon}\left\|\dot\w_{r}(s)\right\|_{2} d s.
\end{align*}
Now we need to bound $\left\|\dot\w_{r}(s)\right\|_{2}$ in $\int_{0}^{\epsilon}\left\|\dot\w_{r}(s)\right\|_{2} d s$ with a time-independent upper bound, different than the previous $O(t^{1-\alpha/2})$ one. We achieve this goal by analyzing another Lyapunov function from \cite{polyak2017lyapunov}
\begin{align*}
    \mathcal{E}(t) =   L(\W(t),\bm a) + \frac{1}{2} \sum_{r} \dot{\bm w_r}(t)^\top \dot{\bm w_r}(t).
\end{align*}
By simple differentiation, we see that $\mathcal{E}$ is decreasing in $t$ since $\dot{ \mathcal{E}}(t)=- \sum_r \frac{\gamma}{t} \|\dot{\bm w_r}(t) \|^2_2\leq 0$, which is obvious using $\frac{d L}{dt}=\sum_r\frac{\partial L}{\partial \w_r}\dot\w_r$. 
This implies that
\begin{align*}
 \frac{1}{2}  \sum_{r}  \dot{\bm w_r}(t)^\top \dot{\bm w_r}(t) \leq \mathcal{E} (t) \leq  \mathcal{E} (0) = L(0). 
\end{align*}
Hence we obtain from $\|\dot\w_r(s)\|^2\leq\sum_r\|\dot\w_r(s)\|^2$ that
\begin{align*}
\int_{0}^{\epsilon}\left\|\dot\w_{r}(s)\right\|_{2} d s 
\leq \int_{0}^{\epsilon}\sqrt{\sum_r\left\|\dot\w_{r}(s)\right\|^2} d s 
\leq \sqrt{2L(0)} \epsilon.
\end{align*}
Therefore, for any $\epsilon$ and $m$, we have
\begin{align*}
\|\w_{r}(t)-\w_{r}(0)\|_{2}
  \leq R'(\epsilon)
  :=\frac{ 2\epsilon^{2 - \alpha/2} }{\alpha - 4} \sqrt{\frac{nC(\alpha,\gamma){L}(0)}{m  (\lambda_0/2)^{\frac{\alpha}{2}}(\gamma - \alpha/2 + 1)^2}  }  + \sqrt{2L(0)} \epsilon\end{align*}
\end{proof}

Lastly, we give \Cref{lem:B4+} in analogy to \Cref{lem:B4}, in order to show that if $R'(\epsilon) < R$, then the conditions in \Cref{lem:B2}, \Cref{lem:B3} hold. 

\begin{lemma}\label{lem:B4+}
If $R' < R$, then we have $\lambda_{\min}(\bm{H}(t)) \geq \frac{\lambda_0}{2}$, for all $r \in [m]$, $\| w_r(t) - w_r(0)\|_2 \leq R'$ and $L(t) \leq t^{-\alpha} A(\alpha,\gamma,\lambda_0) L(0) $, for $4< \alpha \leq \frac{2d}{3}$ and $\gamma > 6$.
\end{lemma}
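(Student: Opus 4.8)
The plan is to run the same continuity (bootstrap) argument that establishes \Cref{lem:B4}, since every NAG-specific estimate has already been isolated inside \Cref{lem:B3+}. The argument is driven by two complementary implications. On one side, \Cref{lem:B2} says that whenever all weights stay within distance $R$ of their initialization, the kernel is well-conditioned, i.e. $\lambda_{\min}(\H(t)) \geq \lambda_0/2$. On the other side, \Cref{lem:B3+} says that whenever $\lambda_{\min}(\H(s)) \geq \lambda_0/2$ holds on the \emph{whole} past interval $[0,t]$, the weights cannot move by more than $R'$ and the loss obeys the polynomial decay $L(t)\leq t^{-\alpha}A(\alpha,\gamma,\lambda_0)L(0)$. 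Because we are handed the strict gap $R'<R$, these two facts can be chained to propagate the good conditions to all $t$.

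Concretely, I would define the first exit time $t_0 := \inf\{t\geq 0 : \|\w_r(t)-\w_r(0)\|_2 > R \text{ for some } r\}$ and show $t_0=\infty$ by contradiction. For every $s<t_0$ all weights lie within $R$ of initialization, so \Cref{lem:B2} yields $\lambda_{\min}(\H(s))\geq \lambda_0/2$; this verifies the hypothesis of \Cref{lem:B3+} on all of $[0,t]$ for each $t<t_0$, and applying that lemma gives $\|\w_r(t)-\w_r(0)\|_2\leq R'$ for every such $t$ and every $r$. Since each trajectory $t\mapsto\w_r(t)$ is continuous (it is a $C^1$ solution of \eqref{eq:NAG flow}, with $\dot\w_r$ given by the bounded closed-form integral derived in \Cref{lem:B3+}), letting $t\uparrow t_0$ gives $\|\w_r(t_0)-\w_r(0)\|_2\leq R'<R$ for all $r$. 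By continuity and the finiteness of the index set $[m]$, the definition of $t_0$ would force some displacement to reach $R$ at $t_0$, a contradiction. Hence $t_0=\infty$, so $\lambda_{\min}(\H(t))\geq\lambda_0/2$ holds for all $t$, and a final application of \Cref{lem:B3+} (now legitimately with $s=t$ for every $t$) delivers both the weight bound $\|\w_r(t)-\w_r(0)\|_2\leq R'$ and the loss bound, which is exactly the conclusion. The constraint $4<\alpha$ enters so that $\int_\epsilon^\infty s^{1-\alpha/2}\,ds$ converges and $R'(\epsilon)$ is finite and uniform in $t$, while $\alpha\leq 2\gamma/3$ is the decay exponent available from \cite{su2014differential}; together they require $\gamma>6$.

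The point demanding the most care, rather than a genuine obstacle, is the direction-of-implication bookkeeping: \Cref{lem:B3+} requires $\lambda_{\min}(\H(s))\geq\lambda_0/2$ over the entire interval $[0,t]$ and not merely at the endpoint, so one must first secure well-conditioning on all of $[0,t_0)$ via \Cref{lem:B2} before invoking the displacement bound. The strictness $R'<R$ is essential, since it is the slack that lets continuity of the weight trajectory rule out the exit event; this is precisely what the width requirement $m=\Omega\left(n^{5\alpha/2-4}/(\delta^{3\alpha/2-3}\lambda_0^{3\alpha/2-2})\right)$ is engineered to guarantee, by optimizing over the free parameter $\epsilon$ appearing in $R'(\epsilon)$.
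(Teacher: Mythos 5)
Your proposal is correct and follows essentially the same route as the paper's proof in \Cref{3442}: a first-exit-time continuity argument that chains \Cref{lem:B2} (weights within $R$ of initialization $\Rightarrow$ $\lambda_{\min}(\H)\geq\lambda_0/2$) with \Cref{lem:B3+} (well-conditioning on $[0,t]$ $\Rightarrow$ displacement at most $R'$ and polynomial loss decay), using the strict slack $R'<R$ to rule out the exit event. Your write-up is in fact a cleaner rendering of the same contradiction argument the paper adapts from \cite{du2018gradient}.
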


The width requirement for $R' < R$ is equivalent to $\frac{ 2\epsilon^{2 - \alpha/2} }{\alpha - 4} \sqrt{\frac{nC(\alpha,\gamma){L}(0)}{m  (\lambda_0/2)^{\alpha/2}(\gamma - \alpha/2 + 1)^2}  }  + \sqrt{2L(0)} \epsilon < O(\frac{\delta \lambda_0}{n^2}) $ for a sufficiently small fixed $\epsilon > 0$. We show the details in \Cref{width NAG} that it suffices to use sufficiently large $m=\Omega\left(\frac{n^{5\alpha/2 - 4}}{\delta^{3\alpha/2 - 3} \lambda_0^{3\alpha/2 - 2}}\right)$.  We note this width lower bound is larger than the width required by GD  in \cite{du2018gradient} and our HB analysis, suggesting smaller $\alpha$ or $\gamma$ may be preferred for NAG (see \Cref{fig:NAG width}), since the width requirement is increasing in $\alpha$.
\begin{figure}[!htb]
    \centering
\includegraphics[width=8cm]{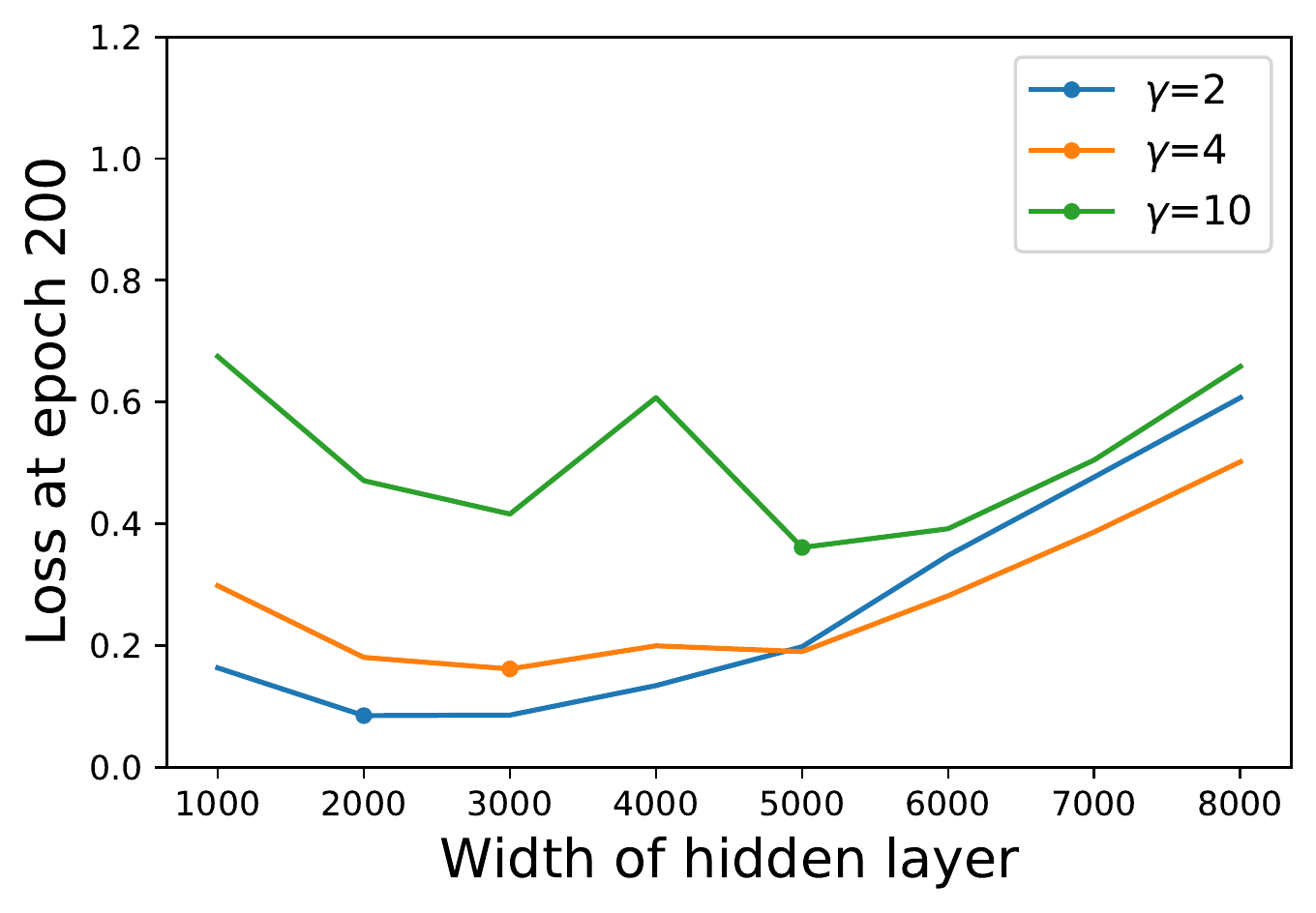}
   \caption{MSE loss with different width of the hidden layer on MNIST. We observe that NAG with larger $\gamma$ may require wider layer to converge at optimal rate. Here we fix the learning rate at $5\times 10^{-5}$ and the dots represent minimum MSE for each case. The loss is averaged of 10 runs. Details of experiments are left in \Cref{exp:nag}.}
    \label{fig:NAG width}
\end{figure}

\subsection{Nesterov Accelerated Gradient - SC}
\label{sec:NAGSC}
In this section, we demonstrate that the Nesterov Accelerated Gradient (NAG) method in a particular form (with time-independent momentum) can enjoy linear convergence. Consider the following optimization algorithms, known as the NAG-SC,
\begin{align*}
\w_r(k+1) &=\v(k+1)+\frac{1-\sqrt{\lambda_0 \eta/2}}{1+\sqrt{\lambda_0 \eta/2}}\left(\v(k+1)-\v(k)\right)
\\
\v(k+1) &=\w_r(k)-\eta \frac{\partial L}{\partial\w_r(k)}
\end{align*}
or equivalently
\begin{align*}
\w_r(k+1)=&\w_r(k)+\frac{1-\sqrt{\lambda_0\eta/2}}{1+\sqrt{\lambda_0\eta/2}}\left(\w_r(k)-\w_r(k-1)\right)
\\
&-\eta\frac{\partial L}{\partial \w_r(k)}-\frac{1-\sqrt{\lambda_0\eta/2}}{1+\sqrt{\lambda_0\eta/2}} \cdot \eta\left(\frac{\partial L}{\partial \w_r(k)}-\frac{\partial L}{\partial \w_r(k-1)}\right).
\end{align*}
We can obtain the weight dynamics as in \cite{wilson2016lyapunov,shi2018understanding},
\begin{align}
\ddot{\bm w_r}(t)+\sqrt{2\lambda_0}\dot{\bm w_r}(t)+\frac{\partial L(\W(t),\bm a)}{\partial \bm w_r(t)}=0.
\end{align}
Consequently the error dynamics is
$$ \ddot\Delta(t)+\sqrt{2\lambda_0}\dot\Delta(t)+\frac{\partial\hat L}{\partial\Delta(t)}\overset{\text{a.s.}}{=}0$$
and this is exactly the same as \eqref{eq:deltadyn}. Therefore we call \Cref{thm:HBF linear rate} to claim the same linear convergence rate for NAG-SC.

\begin{corollary}
Suppose we set the width of the hidden layer $m=\Omega\left(\frac{n^6}{\delta^3\lambda_0^4}\right)$ and we train with NAG-SC. If we i.i.d. initialize $\bm{w}_{r} \sim \mathcal{N}(\mathbf{0}, \mathbf{I}), a_{r} \sim \operatorname{unif}\{-1,1\} $ for $ r \in[m]$, then with high
probability at least $ 1-\delta $ over the initialization, we have
\begin{align*}
 L(t) &\leq\frac{4}{\lambda_0}\exp \left(-\sqrt{\lambda_0/2}\cdot t\right)\hat L(0)
\end{align*}
\end{corollary}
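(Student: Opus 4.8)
The plan is to reduce the Corollary entirely to \Cref{thm:HBF linear rate} by establishing that NAG-SC generates exactly the same limiting dynamics as the Heavy Ball system with friction coefficient $b=\sqrt{2\lambda_0}$. First I would verify that the discrete NAG-SC recursion, with its time-independent momentum $\frac{1-\sqrt{\lambda_0\eta/2}}{1+\sqrt{\lambda_0\eta/2}}$, converges in the infinitesimal step-size limit to the weight dynamics $\ddot{\bm w_r}(t)+\sqrt{2\lambda_0}\dot{\bm w_r}(t)+\frac{\partial L}{\partial \bm w_r(t)}=0$; this is precisely the low-resolution ODE derived in \cite{wilson2016lyapunov,shi2018understanding}, and it coincides with the HB flow \eqref{eq:HB special b}. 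The essential point is that this particular momentum parameter produces a friction term equal to twice the square root of the strong-convexity modulus $\lambda_0/2$, matching the HB setup exactly.

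Once the weight dynamics agree, I would pass to the prediction and error dynamics as in \Cref{proof:HBF thm}: multiplying the weight ODE by $\frac{\partial \f}{\partial \bm w_r}$, summing over $r$, and invoking the almost-sure vanishing of the Hessian terms from \eqref{eq:second derivative zero}, one arrives at $\ddot{\Delta}(t)+\sqrt{2\lambda_0}\dot{\Delta}(t)+\H(t)\Delta(t)\overset{\text{a.s.}}{=}0$, which is literally the HB error dynamics \eqref{eq:deltadyn}. Because the governing ODE for the error is identical, the entire downstream analysis transfers without modification and no new estimate is required.

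Concretely, I would apply \Cref{lem:B1} and \Cref{lem:B2} to control $\lambda_{\min}(\H(t))$, then invoke the Lyapunov argument of \Cref{lem:B3} verbatim, since the energy $V(t)=\hat L+\frac{1}{2}\|\sqrt{\lambda_0/2}\,\Delta+\dot\Delta\|^2$ and the resulting bound $\dot V(t)<-\sqrt{\lambda_0/2}\,V(t)$ depend only on \eqref{eq:deltadyn} and not on the algorithm that produced it. This yields both the exponential loss decay and the weight-displacement bound $R'$, and combining with \Cref{lem:B4} closes the bootstrap, giving the identical width requirement $m=\Omega(n^6/(\delta^3\lambda_0^4))$. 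The conclusion $L(t)\leq\frac{4}{\lambda_0}\exp(-\sqrt{\lambda_0/2}\,t)\hat L(0)$ then follows directly from \Cref{thm:HBF linear rate}.

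The main (and essentially only) obstacle is the first step: rigorously confirming that the NAG-SC recursion limits to the friction coefficient $\sqrt{2\lambda_0}$ rather than some other value. Since NAG-SC and HB are distinguishable only at the high-resolution ODE scale \cite{shi2018understanding}, I must ensure the reduction is carried out at the low-resolution level where the two flows coincide. Beyond this identification the Corollary is immediate, as the whole convergence machinery was already built for precisely this error dynamics.
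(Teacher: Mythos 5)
Your proposal is correct and follows essentially the same route as the paper: identify the NAG-SC limiting ODE with the Heavy Ball flow \eqref{eq:HB special b}, observe that the error dynamics is then literally \eqref{eq:deltadyn}, and invoke \Cref{thm:HBF linear rate}. The paper simply cites \cite{wilson2016lyapunov,shi2018understanding} for the ODE identification and concludes in one line, whereas you additionally spell out the downstream Lyapunov machinery, but no new content is introduced.
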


However, this form of NAG is difficult to implement in reality because $\lambda_0$ is not known and cannot be used as coefficients in the optimization algorithms. In comparison, we claim that the generalized NAG in \Cref{thm:Nesterov rate} can be employed without the knowledge of the strong convexity coefficient of $\hat L$.

\section{Notable Extensions}

The limiting ODE of different optimization algorithms gives rise to a general framework of analyzing the strongly-convex pseudo-loss $\hat L$. We can further extend the analysis in some major directions: higher order ODE, higher order optimization algorithms and deeper neural networks.

\subsection{Higher Order Momentum}
We consider the convergence behavior of higher order ODE, resulting from using multiple momentums in GD, as an extension of HB. We define the higher order momentum (HM$[J]$) by
\begin{equation} \label{higher order momentum: updating rule}
\bm w_r(k+1)=\bm w_r(k)-\eta\frac{\partial L(\W(k),\bm a)}{\partial \bm w_r(k)}+\sum_{j \in [J]}\beta_j(\bm w_r(k-j+1)-\bm w_r(k-j))
\end{equation}
which reduces to HB when $J=1$ and to GD when $J=0$. We remark that the weight updating rule can be implemented efficiently by introducing intermediate variables instead of using a single variable $w_r$ (this can be found in \Cref{app:efficient HM}). Consequently, the gradient flow is a $(J+1)$-th order ODE,
 \begin{align}
\frac{d^{J+1}}{dt^{J+1}}\w_r(t)+\sum_{j\in[J]}b_j\frac{d^j}{dt^j}\w_r(t)+\frac{\partial L(\W(t),\bm a)}{\partial \bm w_r(t)}=0.
\label{eq:HM old weight dynamics}
\end{align}
By similar argument as in \eqref{eq:second derivative zero}, we have the higher order derivative $\frac{\partial^j \f}{\partial \W^j}\overset{\textup{a.s.}}{=}0$ for $j>1$, hence the error dynamics follows the same form as the weight dynamics, except $L$ is replaced by $\hat L$. To see this, we have $\frac{\partial^j \f}{\partial t^j}\overset{\textup{a.s.}}{=}\sum_r\frac{\partial \f}{\partial \w_r}\frac{d^{j}\w_r(t)}{dt^{j}}$ and
 \begin{align}
\frac{d^{J+1}}{dt^{J+1}}\Delta(t)+\sum_{j\in[J]}b_j\frac{d^j}{dt^j}\Delta(t)+\frac{\partial \hat L}{\partial \Delta(t)}=0.
\label{HM:old error dynamics}
\end{align}

To directly analyze this high order ODE via the Lyapunov function is difficult. For the special case of $J=2$, a second momentum has been shown to further accelerate the convergence than the first momentum \cite{pearlmutter1992gradient}, when the loss is a positive quadratic form. Empirically, we observe in \Cref{fig:HM} that, although $L$ is not convex nor quadratic, employing the second momentum leads to faster convergence as well. 

\begin{figure}[!htb]
\centering
\includegraphics[width=8cm]{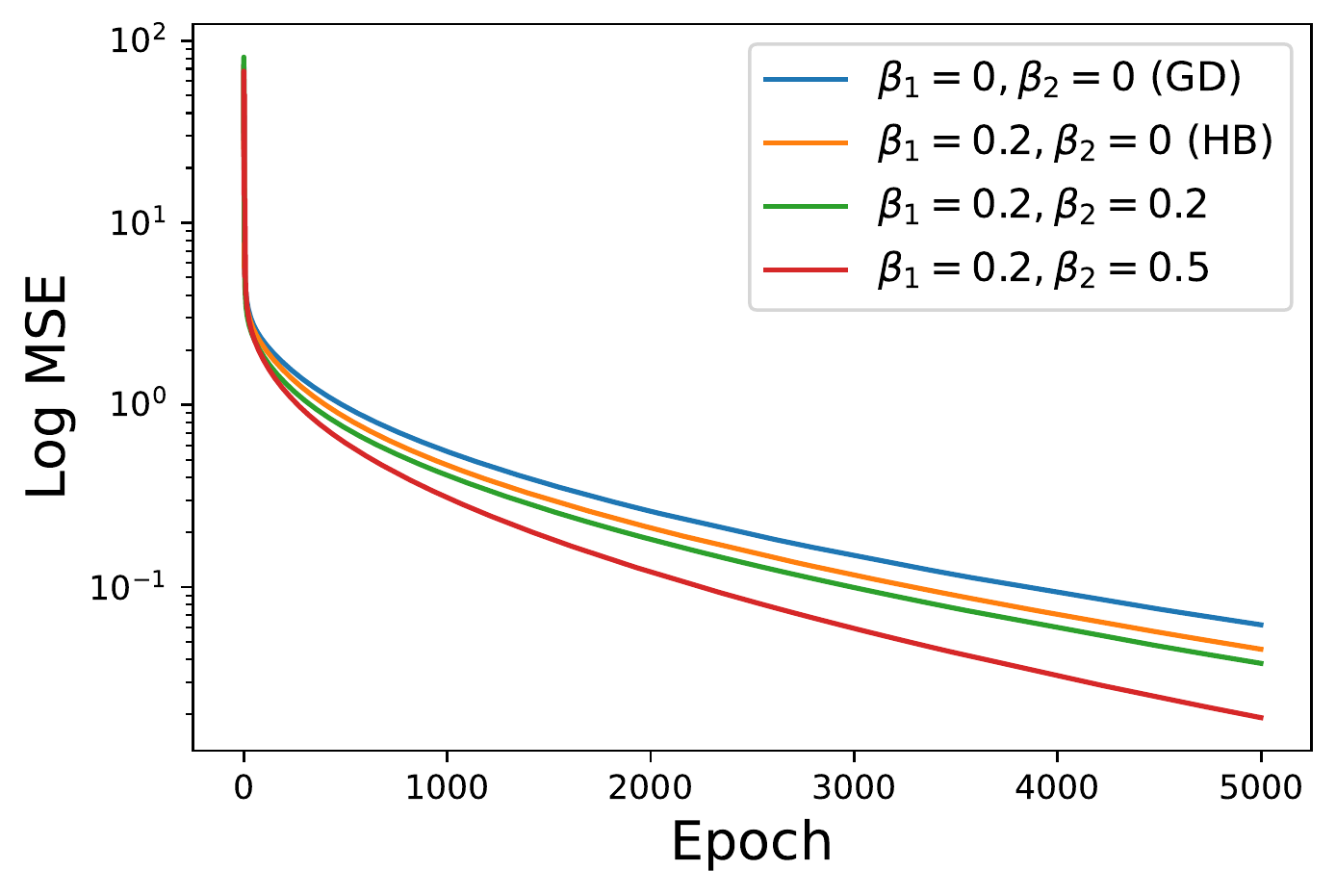}
 \caption{Logarithm MSE for optimization algorithms with different number of momentums on MNIST. We observe that the higher momentum method allows faster convergence rate than GD and HB. Details of experiments are left in \Cref{exp:hm}.} 
\label{fig:HM}
 \end{figure}

\subsection{Newton's Method}
In this section we study the higher order optimization algorithms that use the Hessian information. By the definition of Newton-Ralphson method, we have
\begin{align*}
\bm w_r(k+1)&=\bm w_r(k)-{\nabla^2 L(\W(k),\bm a)}^{-1} \nabla L(\W(k),\bm a)
\end{align*}
in which $\nabla^2 L=\frac{\partial^2 L}{\partial \bm w_r^2}
$ and $\nabla L=\frac{\partial L}{\partial \bm w_r}$. We observe by the chain rule (c.f. \eqref{eq:second derivative zero}) that
\begin{align*}
\frac{\partial^2 L}{\partial \bm w_r^2}
=\left\langle\frac{\partial \f}{\partial \w_r},\frac{\partial \f}{\partial \w_r}\right\rangle+\frac{\partial^2 \f}{\partial \bm w_r^2}(\f-\y)
\overset{\text{a.s.}}{=}\left\langle\frac{\partial \f}{\partial \bm w_r},\frac{\partial \f}{\partial \bm w_r}\right\rangle\in\R^{p\times p}.
\end{align*}
We highlight that the Hessian matrix is assumed to be invertible for Newton-Ralphson method to apply. In other words, there exists a constant $\lambda_r>0$ which lower bounds $\lambda_{\min}\left(\big(\frac{\partial \f}{\partial \bm w_r}\big)^\top\frac{\partial \f}{\partial \bm w_r}\right)$. This assumption is equivalent to $\frac{\partial \f}{\partial\w_r(t)}$ having linearly independent columns, which requires $n>p$. The corresponding gradient flow is then a first order ODE
\begin{align}
\dot\w_r&=-\left(\left(\frac{\partial \f}{\partial\w_r}\right)^\top\frac{\partial \f}{\partial \w_r}\right)^{-1}\left(\frac{\partial \f}{\partial\w_r}\right)^\top(\f-\y)
\label{eq:newton weight dynamics}
\end{align}
The error dynamics can be obtained by left multiplying $\frac{\partial \f}{\partial\w_r}$ and summing over $r$. Denoting $\nabla f_r(t):=\frac{\partial \f}{\partial\w_r(t)}\in \R^{n\times p}$, we have
\begin{align}
\dot{\Delta}(t)\overset{\text{a.s.}}{=}-\sum_r \nabla f_r\left(\nabla f_r^\top \nabla f_r\right)^{-1}\nabla f_r^\top\Delta(t)
\label{eq:newton error dynamics}
\end{align}
We can define the pseudo-loss as
$$\hat L=\frac{1}{2}\Delta^\top\Big(\sum_r\nabla f_r\left(\nabla f_r^\top\nabla f_r\right)^{-1}\nabla f_r^\top\Big)\Delta$$
so that the error dynamic is simply $\dot\Delta=-\frac{\partial \hat{L}}{\partial\Delta}.$

We state our result about the convergence of the Newton-Ralphson method.
\begin{proposition}\label{thm:Newton linear rate}
Suppose we train with Newton-Ralphson method and $\lambda_{\min} (\bm{H}(t)) \geq \frac{\lambda_0}{2}$, then we have
\begin{align*}
 L(t) &\leq\exp \left(-\frac{\lambda_0 mt}{n}\right)L(0)
\end{align*}
\end{proposition}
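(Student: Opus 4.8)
The plan is to recognize the Newton error dynamics \eqref{eq:newton error dynamics} as a linear system $\dot\Delta(t)=-M(t)\Delta(t)$ driven by a sum of orthogonal projections, and then to lower-bound the smallest eigenvalue of that matrix by comparing it to the NTK matrix $\H(t)$. Write $M(t):=\sum_r\nabla f_r(\nabla f_r^\top\nabla f_r)^{-1}\nabla f_r^\top$, so that each summand $P_r:=\nabla f_r(\nabla f_r^\top\nabla f_r)^{-1}\nabla f_r^\top$ is exactly the orthogonal projection onto the column space of $\nabla f_r\in\R^{n\times p}$; this is where the invertibility assumption (equivalently $n>p$) is used, since it guarantees $P_r$ has rank $p$.

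First I would run the standard Gronwall argument on $L=\frac12\norm{\Delta}^2$. Differentiating, $\frac{d}{dt}\norm{\Delta}^2=2\Delta^\top\dot\Delta=-2\Delta^\top M(t)\Delta\le -2\lambda_{\min}(M(t))\norm{\Delta}^2$, so the whole statement reduces to the uniform lower bound $\lambda_{\min}(M(t))\ge\frac{\lambda_0 m}{2n}$; once this holds, Gronwall gives $\norm{\Delta(t)}^2\le e^{-\lambda_0 m t/n}\norm{\Delta(0)}^2$, which is the claim since $L=\frac12\norm{\Delta}^2$.

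The core of the proof is a spectral comparison between $M(t)$ and $\H(t)=\sum_r\nabla f_r\nabla f_r^\top$. From a thin SVD $\nabla f_r=U\Sigma V^\top$ one has $\nabla f_r\nabla f_r^\top=U\Sigma^2 U^\top\preceq\sigma_{\max}(\nabla f_r)^2\,UU^\top=\norm{\nabla f_r}_2^2\,P_r$, and summing over $r$ yields the Loewner bound $\H(t)\preceq(\max_r\norm{\nabla f_r}_2^2)\,M(t)$. Consequently $\lambda_{\min}(M(t))\ge\lambda_{\min}(\H(t))/\max_r\norm{\nabla f_r}_2^2\ge(\lambda_0/2)/\max_r\norm{\nabla f_r}_2^2$ by the hypothesis $\lambda_{\min}(\H(t))\ge\lambda_0/2$. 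It then remains to show $\max_r\norm{\nabla f_r}_2^2\le n/m$. Writing $\nabla f_r=\frac{1}{\sqrt m}a_r\D_r\X$ with $\D_r=\mathrm{diag}(\I(\w_r^\top\x_i>0))$ and $\X=[\x_1,\dots,\x_n]^\top$ from \eqref{eq:second derivative zero}, we have $\nabla f_r\nabla f_r^\top=\frac1m\D_r\X\X^\top\D_r$, and since $\norm{\D_r}_2\le 1$ and the inputs are unit vectors, $\norm{\nabla f_r}_2^2=\frac1m\norm{\D_r\X\X^\top\D_r}_2\le\frac1m\norm{\X\X^\top}_2\le\frac1m\,\mathrm{tr}(\X\X^\top)=\frac1m\sum_i\norm{\x_i}^2=\frac{n}{m}$. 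Combining these gives $\lambda_{\min}(M(t))\ge\frac{\lambda_0 m}{2n}$, as required.

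I expect the only delicate step to be the Loewner comparison $\H(t)\preceq(\max_r\norm{\nabla f_r}_2^2)M(t)$ — that each rank-$p$ projection $P_r$ dominates its own outer product $\nabla f_r\nabla f_r^\top$ up to the top singular value — since this is precisely what converts the eigenvalue bound on the kernel $\H$ into one on the unweighted sum of projections $M$. The remaining pieces (the trace bound on $\norm{\X\X^\top}_2$ and the Gronwall estimate) are routine.
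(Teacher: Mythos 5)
Your proposal is correct and follows essentially the same route as the paper's proof: both reduce the claim to the Loewner bound $\sum_r \nabla f_r(\nabla f_r^\top\nabla f_r)^{-1}\nabla f_r^\top \succeq \H(t)/\max_r\lambda_{\max}(\nabla f_r\nabla f_r^\top)$, bound $\lambda_{\max}(\nabla f_r\nabla f_r^\top)\leq\operatorname{tr}(\nabla f_r\nabla f_r^\top)\leq n/m$, and finish with Gronwall on $\frac{d}{dt}\|\Delta\|^2$. The only difference is presentational: you obtain the per-neuron inequality $P_r\succeq\nabla f_r\nabla f_r^\top/\|\nabla f_r\|_2^2$ via the thin SVD, whereas the paper gets the identical inequality by bounding the quadratic form $(\nabla f_r^\top\v)^\top(\nabla f_r^\top\nabla f_r)^{-1}(\nabla f_r^\top\v)$ below by $\lambda_{\min}\left((\nabla f_r^\top\nabla f_r)^{-1}\right)\|\nabla f_r^\top\v\|^2$.
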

We highlight that, \Cref{thm:Newton linear rate} gives the Newton-Ralphson method a significantly faster linear convergence rate than GD if $m/n\gg 1$. This implies the advantage of using the second-order optimization algorithms over the first-order ones, if we set aside the concerns on the complexity and memory incurred by the Hessian information.

\subsection{Multi-layer Training}
\label{sec:multi_layer}
Though we do not dive into the detailed proof of extending our main theorem to multi-layer neural network, including training both layers of the two-layer neural network and deep neural networks, we provide a sketch proof direction as follows: \\\\
On high level, we can write the training dynamics for the deep fully-connected neural networks as \begin{gather*}
\x_r^{(l)}=\frac{1}{\sqrt{m}}\sigma\left((\w_r^{(l)})^\top\x^{(l-1)}\right)\\
f(\x;\W,\a)=\a^\top\x^{\mathcal{L}}
\end{gather*}
in which the layer index $l\in[\mathcal{L}-1]$, input $\x^{(0)}=\x$, the post-activation $\x_r^{(l)}$ corresponds to the $r$-th hidden neuron in the $l$-th layer and each layer has $m$ hidden neurons, i.e. $r\in [m]$. We denote the union of all weights as $\W=\{\w_r^{(1)},\cdots,\w_r^{(\mathcal{L}-1)},\a\}$.
Similar to \eqref{eq:weight dynamics}-\eqref{eq:error dynamics}, we have
\begin{align*}
\ddot{\Delta}(t)+b(t)\dot{\Delta}(t) +\H(t)\Delta(t)=0
\end{align*}
where $b(t)$ depends on different optimization algorithms. Here the NTK is
$\H_{multi}(t)=\sum_l \H_l(t)+\H_a(t)$ with $\H_l(t)=\sum_r\frac{\partial \f(t)}{\partial \bm{w}_{r}^{(l)}(t)}\left(\frac{\partial \f(t)}{\partial \bm{w}_{r}^{(l)}(t)}\right)^\top$ and $\H_a(t)=\frac{\partial \f(t)}{\partial \a(t)}\left(\frac{\partial \f(t)}{\partial \a(t)}\right)^\top$. Clearly $\H_l$ and $\H_a$ are positive semi-definite. Therefore it suffices to show there exists one $l\in[\mathcal{L}-1]$ that is positive definite. In our two-layer analysis, we show the Gram matrix corresponding to the last (and the first) hidden layer is positive definite. We note that \cite{dugradient} has also shown $\H_{\mathcal{L}-1}$ is positive definite for GD and for deep fully-connected neural networks. We believe similar analysis applies to other optimization algorithms such as HB and NAG.


\section{Discussion}
In this paper, we extend the convergence analysis of over-parameterized neural networks to different accelerated optimization algorithms, including the Heavy Ball method (HB) and the Nesterov accelerated gradient descent (NAG). Our analysis is based on the neural tangent kernel (NTK) which characterizes the training dynamics as ODEs known as the gradient flows. We observe that, for piecewise linear activation functions (e.g. ReLU, Leaky ReLU \cite{maas2013rectifier} and maxout \cite{goodfellow2013maxout, he2015delving}), the weight dynamics takes the same form as the error dynamics (for example, \eqref{eq:heavy ball flow} and \eqref{eq:heavy ball error flow}), with the only difference lying in the losses. To see this, we recall $\frac{\partial^2 \f}{\partial\w_r\partial\w_l}\overset{\text{a.s.}}{=}0$ for \textit{piecewise linear activations}. In particular, the loss in the error dynamics is strongly-convex, leading to linear convergence of HB (and GD) and polynomial decay or linear convergence of NAG. We emphasize that by constructing the strongly-convex loss $\hat L$, we can easily borrow the rich results from the convex optimization world to analyze the convergence of neural networks on non-convex loss. 
We remark that instead of using the Gronwall inequality, as in the case of GD \cite{du2018gradient}, we use the Lyapunov function, which is a common and traditional tool in solving ODE with convex loss. In fact, the Gronwall inequality generally does not work on second or higher order ODE. 

A major extension of our one-layer training process will be to train all layers in deep neural networks, which has been well-established for GD in \cite{allen2019convergence,dugradient} for various types of neural networks such as RNN, CNN, ResNet, Graph neural networks (GNN) and Generative Adversarial networks (GAN). Since the main focus of this work is on the optimization algorithms instead of neural network architectures, we do not over-complicate our proof by exploring deeper learning (including two-layer training) but leave the discussion in \Cref{sec:multi_layer}. We also emphasize that the error dynamics of GD/HB/NAG, i.e. the limiting ODEs, are indeed compatible to arbitrary neural networks such as CNN and ResNet. To extend our analysis only requires a different route to guarantee $\lambda_{\min}(\H(t))>0$. For similar reasons, we claim the discrete time convergence analysis of the HB and NAG is similar to their continuous time counter-parts and can be derived within our framework of proofs, but we do not try to present the rigorous theorems in this paper.

We pause to discuss the width requirement in NTK regime. Unfortunately, existing works generally require unrealistic width (see the summary in \cite{zou2019improved}): $n^{24}$ is required in \cite{allen2019convergence} and $n^8$ in \cite{zou2019improved}, both comparable to our $\frac{n^6}{\lambda_0(n)^4}$. Hence we do not attempt to reduce our width with any complicated method. Nevertheless, all our experiments use discrete gradient descents and realistic widths, ranging from $10^3$
to $10^4$ hidden neurons. This is detailed in \Cref{app:experiment details}.

Another important future direction is to study the adaptive optimizers such as

Our analysis may further extend to more optimization algorithms. Unlike GD, HB and NAG, many popular optimization algorithms adopt adaptive learning rates and demonstrate impressive performance. Examples include AdaGrad \cite{duchi2011adaptive}, AdaDelta \cite{zeiler2012adadelta}, RMSprop \cite{hinton2012neural}, Adam \cite{kingma2014adam}, DIN\cite{alvarez2002second} and especially their variants with momentums and mini-batches. These optimizers are expected to converge faster than those analyzed in this work, but they may require a different framework to analyze. Notably, optimization algorithms with adaptive learning rates can correspond to a system of limiting ODEs, instead of a single ODE. In \cite{da2020general}, the following systems of ODEs characterize the evolution of Adam, AdaFom, HB and NAG with different functions $(h,\gamma,p,q)$ on the left and the evolution of AdaGrad and RMSprop without a momentum term $\m_r(t)$ on the right:
\begin{align*}
\left\{\begin{aligned} \dot{\w_r}(t) &=-\m_r(t) / \sqrt{\v_r(t)+\varepsilon} \\ \dot{\m_r}(t) &=h(t) \frac{\partial L}{\partial \w_r(t)}-\gamma(t) \m_r(t) \\ \dot{\v_r}(t) &=p(t)\left[\frac{\partial L}{\partial \w_r(t)}\right]^{2}-q(t) \v_r(t) \end{aligned}\right. 
&\quad\quad\text{or}\quad\quad
\left\{\begin{aligned} \dot{\w_r}(t) &=-\frac{\partial L}{\partial \w_r(t)} / \sqrt{\v_r(t)+\varepsilon} \\ \dot{\v_r}(t) &=p(t)\left[\frac{\partial L}{\partial \w_r(t)}\right]^{2}-q(t) \v_r(t) \end{aligned}\right. 
\end{align*}
Interestingly, we observe that the error dynamics again follows a similar form of the weight dynamics. Suppose we consider the memory term $\v_r(t)\in \R^p$ with identical entries across $r$, i.e. all weights share the same adaptive learning rate. These optimizers are still adaptive to the training dynamics though not in a per-parameter sense. Then we can treat $\v_r(t)$ as a scalar $v(t)$. By multiplying $\frac{\partial \f}{\partial \bm{w}_r}$ and sum over $r$, the error dynamics for adaptive optimization algorithms will be
\begin{align*}
\left\{\begin{aligned} 
\dot{\Delta}(t) &=-\Q(t) / \sqrt{v(t)+\varepsilon} 
\\ 
\dot{\Q}(t) &=h(t) \frac{\partial \hat L}{\partial \Delta(t)}-\gamma(t) \Q(t)
\end{aligned}\right. 
&\quad\quad\text{or}\quad\quad
\begin{aligned} 
\dot{\Delta}(t) &=-\frac{\partial \hat L}{\partial \Delta(t)}/ \sqrt{v(t)+\varepsilon} 
\end{aligned}
\end{align*}
in which $\Q(t):=\sum_r\frac{\partial\f}{\partial\w_r}\m_r(t)$. It would be desirable to analyze these adaptive optimization algorithms in future works, though the convergence analysis can be more difficult than in this work and require more advanced tools in Lyapunov function and ODE theory.

\newpage
\medskip
\printbibliography

\clearpage
\appendix

\section{Deriving Width from $R'<R$}
In this section we derive the width requirement for our main theorems. We slightly abuse the notation of $\alpha$ here: For HB analysis, $\alpha:=\sqrt{\lambda_0/2}$. For NAG analysis, $\alpha$ is a constant in $[4,2d/3]$.

\subsection{Width Requirement in \Cref{thm:HBF linear rate} (HB)}
\label{width HB}
\paragraph{\Cref{thm:HBF linear rate}}
To guarantee $R'<R$, we need $\sqrt{\frac{32n\hat L(0)}{9m\alpha^6}}<O\left(\frac{\delta\lambda_0}{n^2}\right)$ which is equivalent to $m=\Omega\left(\frac{n^5}{\delta^2\lambda_0^5}\cdot \hat L(0)\right)$. Since $\lambda_{\min}(\H(0))>\lambda_0/2$, we have $\hat L>\frac{\lambda_0}{2}L(0)$ and we can bound $L(0)$ using Markov's inequality: with probability at least $ 1-\delta$, we obtain $L(0)=O\left(\frac{n}{\delta}\right)$, because
$$
\mathbb{E}[L(0)]=\sum_{i=1}^{n}\left(y_{i}^{2}+y_{i} \mathbb{E}\left[f\left(\mathbf{W}(0), \mathbf{a}, \mathbf{x}_{i}\right)\right]+\mathbb{E}\left[f\left(\mathbf{W}(0), \mathbf{a}, \mathbf{x}_{i}\right)^{2}\right]\right)=\sum_{i=1}^{n}\left(y_{i}^{2}+1\right)=O(n). 
$$
Therefore
$$m=\Omega\left(\frac{n^4}{\delta^2\lambda_0^4}\cdot \frac{n}{\lambda_0}\cdot\frac{n\lambda_0}{\delta}\right)=\Omega\left(\frac{n^6}{\delta^3\lambda_0^4}\right).$$

\subsection{Width Requirement in \Cref{thm:Nesterov rate} (NAG)} 
\label{width NAG}
To guarantee $R'<R$, we need $ \frac{ 2\epsilon^{2 - \alpha/2} }{\alpha - 4} \sqrt{\frac{nC(\alpha,\gamma){L}(0)}{m  (\lambda_0/2)^{\frac{\alpha}{2}}(\gamma - \alpha/2 + 1)^2}  }  + \sqrt{2L(0)} \epsilon < O(\frac{\delta \lambda_0}{n^2}) $. Choose small enough $\epsilon$ such that $\sqrt{2L(0)}\epsilon < R/2 $. This is equivalent to set $\epsilon = O(\frac{\delta^{3/2}\lambda_0}{n^{5/2}})$ since $L(0)=O(n/\delta)$. Then we need $\frac{ 2\epsilon^{2 - \alpha/2} }{\alpha - 4} \sqrt{\frac{nC(\alpha,\gamma){L}(0)}{m  (\lambda_0/2)^{\frac{\alpha}{2}}(\gamma - \alpha/2 + 1)^2}  }<R/2$. Ignoring constant terms, this is equivalent to $ \epsilon^{2 - \alpha/2} \sqrt{\frac{n{L}(0)}{m  \lambda_0^{\alpha/2}}  }< O(\frac{\delta \lambda_0}{n^2})$ and further to $m = \Omega\left( \frac{\epsilon^{4-\alpha} n^5}{\lambda_0^{\alpha/2 + 3 }\delta^2 } L(0) \right) $. Similar to the analysis in the previous section, we have $L(0)=O(n/\delta)$ and therefore
$$m=\Omega\left(\frac{n^{5\alpha/2 - 4}}{\delta^{3\alpha/2 - 3} \lambda_0^{3\alpha/2 - 2}}\right).$$

Notice that as $\alpha\to 4$, the width requirement tends to the same width as for GD and HB. Moreover, the width is increasing in $\alpha$.

\section{Memory-efficient Implementation of HM Method}
\label{app:efficient HM}
Here we derive the iterative term for second-order momentum $(J=2)$ for implementation similar to \cite{sutskever2013importance}.  Recall that for the weight dynamics in the single-variable form is
\begin{align*}
\bm w_r(t+1)=\bm w_r(t)-\eta\frac{\partial L(\bm w_r(t),\bm a)}{\partial \bm w_r(t)}
&+\beta_1(\bm w_r(t)-\bm w_r(t-1))+\beta_2(\bm w_r(t-1)-\bm w_r(t-2))
\end{align*}
Denoting $\frac{\partial L(\bm w_r(t),\bm a)}{\partial \bm w_r(t)}$ as $g(t)$ gives the following term:
\begin{align*}
\bm w_r(t+1)=\bm w_r(t)-\eta g(t)+\beta_1(\bm w_r(t)-\bm w_r(t-1))+\beta_2(\bm w_r(t-1)-\bm w_r(t-2))
\end{align*}
Our implementation will be more efficient than directly implementing the single-variable update above. Introducing two new variables $\bm v, \bm q$ and given an objective loss function $L$ to be minimized, classical momentum is given by:
\begin{align*}
\bm w_r(t+1) &= \bm w_r(t) - \eta \bm v(t+1)\\
\bm v(t+1) &= a\bm v(t) +\bm q(t+1)\\
\bm q(t+1) &= b\bm q(t)+g(t)
\end{align*}
where $a = \frac{1}{2}(\beta_1+\sqrt{\beta_1^2+4\beta_2})$ and $b= \frac{1}{2}(\beta_1+\sqrt{\beta_1^2-4\beta_2})$. Note that when $\beta_2=0$, the above momentum is the momentum of heavy ball. And when $\beta_1=0, \beta_2=0$, it can recover gradient descent.

\section{Proof of \Cref{lem:B4} and \Cref{lem:B4+}}
\label{3442}
This proof is almost identical to \cite[Lemma 3.4]{du2018gradient} and is adapted to here for the completeness of our paper.
\\\\
Suppose the conclusion does not hold for at time $t$. Then we split the analysis into two cases: (1) there exists $r\in [m]$ such that $\|\w_r(t)-\w_r(0)\|_2\geq R^\prime$; (2) $L(t)\geq \frac{4}{\lambda_0}\exp(-\sqrt{\lambda_0/2}t)\hat L(0)$.\\
For case (1), if we are proving \Cref{lem:B3}
or $L(t)\leq A(\alpha, d, \lambda_0)t^{-\alpha}\hat L(0)$ if we are proving \Cref{lem:B3+}. 
We know there exists $s\leq t$ such that $\lambda_{\min}(\H(s))<\frac{1}{2}\lambda_0$. However, \Cref{lem:B2} shows that there exists a $t_0$ with
\begin{align*}
t_{0}=\inf \left\{t>0: \max _{r \in[m]}\left\|\mathbf{w}_{r}(t)-\mathbf{w}_{r}(0)\right\|_{2}^{2} \geq R\right\}.
\end{align*} 
Consequently, there exists $r\in [m]$ that  $\|\w_r(t_0)-\w_r(0)\|_2=R$. Applying \Cref{lem:B2}, we know that $\H(t')\geq \frac{1}{2}\lambda_0$ for $t' \leq t_0$. However, by \Cref{lem:B3}/\Cref{lem:B3+} we know $\|\w_r(t_0)-\w_(t)\|_2<R^\prime<R$, a contradiction.\\
For case (2), at time $t$, $\lambda_{\min}(\H(t))<\frac{1}{2}\lambda_0$ we know there exists
\begin{align*}
t_{0}=\inf \left\{t\geq 0: \max _{r \in[m]}\left\|\mathbf{w}_{r}(t)-\mathbf{w}_{r}(0)\right\|_{2}^{2} \geq R\right\}.
\end{align*} 
The rest of the proof is the same as the previous case.

\section{Experiment}
\label{app:experiment details}
The MNIST dataset \cite{lecun2010mnist} is a dataset of handwritten digits ranging from 0 to 9. It contains 60,000 training images and 10,000 test images. Each image is in $28 \times 28$ gray-scale. We train the two-layer neural networks with random initialization: $\w_r$ (the weight in the first layer) is initialized with distribution $ \mathcal{N}(\mathbf{0}, \mathbf{I})$ and $a_{r}$ (the weight in the second layer) is initialized with distribution $ \operatorname{unif}\{-1,1\}$. We randomly take 600 subsamples out of the original training dataset, in order to run full-batch gradient descents within limited GPU memory.
\subsection{Experiment of Heavy Ball (HB)}
\label{exp:hb}
This subsection corresponds to  \Cref{fig:HB}. The neural network is trained in 5000 steps with hidden neurons in the two-layer neural network is 3000. We use three different momentum term $\beta=0, 0.2, 0.5, 0.8$ with learning rate $10^{-4}$. When $\beta=0$, the HB reduces to GD. We take the log of MSE in the y-axis. The constant slope in figures depicts the linear convergence rate. 
\subsection{Experiment of Nesterov Accelerated Gradient (NAG)}
\label{exp:nag}
The basic setting of \Cref{fig:NAG} is the same as \Cref{exp:hb} except we replace the momentum term with "nesterov term" $\gamma$, which are $2,3,6,8$ respectively. We take log on both the time/step size as well as the MSE loss to show the polynomial decay of NAG. In \Cref{fig:NAG width}, the learning rate is fixed as $5*10^{-5}$ while width ranges from $1000$ to $9000$. The experiment is run for ten times using different seeds for random initialization and we record the loss at epoch 200 for every $\gamma$ and width.

\subsection{Experiment of Higher Order Momentum (HM)}
\label{exp:hm}
In \Cref{fig:HM}, we take $\beta_1=0, 0.2$ and $\beta_2$ varies from $0, 0.2, 0.5$. Here $\beta_2=0$ recovers the Heavy Ball (HB) and $\beta_1=0$ recovers the Gradient Descent (GD). The total number of epochs, number of hidden neurons and learning rate are the same as \Cref{exp:hb}. We take log of MSE in the y-axis along with epochs in the x-axis.

\section{Additional Technical Details}
\subsection{Convergence of NAG in Strongly Convex Case \cite{su2014differential}}
\label{app:su nesterov}
The following result is used in the proof of \Cref{lem:B3+}. Here $\hat L^\star$ means the global minimum of $\hat L$ and $\Delta^\star$ is the global minimizer, both of which are zero.
\begin{theorem}[Theorem 8 in \cite{su2014differential}]
For any $ \hat L $ being $\mu$–strongly convex on $\R^n$ with continuous gradients, if $ 2 \leq \alpha \leq 2 \gamma / 3 $ we get from the dynamics \eqref{eq:NAG error dynamics}  that
\begin{align}
 \hat L(\Delta(t))-\hat L^{\star} \leq \frac{C(\alpha,\gamma)\left\|\Delta(0)-\Delta^{\star}\right\|^{2}}{\mu^{\frac{\alpha}{2}-1} t^{\alpha}}
\end{align}
for any $ t>0 . $ Above, the constant $ C(\alpha,\gamma) $ only depends on $ \alpha $ and $ \gamma . $
\end{theorem}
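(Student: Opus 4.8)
The plan is to run a time-weighted Lyapunov (energy) argument directly on the second-order error dynamics \eqref{eq:NAG error dynamics}, using exactly the function already introduced in the proof of \Cref{lem:B3+}, namely
\begin{align*}
V(t) := t^{\alpha}\big(\hat L(\Delta(t))-\hat L^\star\big) + \frac{(2\gamma-\alpha)^2}{8}\,t^{\alpha-2}\norm{\Delta(t)-\Delta^\star+\tfrac{2t}{2\gamma-\alpha}\dot\Delta(t)}^2 .
\end{align*}
Since $V(t)\ge t^{\alpha}\big(\hat L(\Delta(t))-\hat L^\star\big)$, any uniform-in-$t$ bound $V(t)\le M$ immediately yields the claimed $O(t^{-\alpha})$ decay of $\hat L-\hat L^\star$, so the entire task reduces to controlling $\sup_t V$.

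Before touching $V$, I would first peel off the strong-convexity constant $\mu$ by a scale-covariance argument, which is precisely what produces the $\mu^{\alpha/2-1}$ factor. The ODE $\ddot\Delta+\frac{\gamma}{t}\dot\Delta+\nabla\hat L(\Delta)=0$ is invariant under the joint substitution $t\mapsto t/\sqrt{\mu}$ and $\hat L\mapsto \hat L/\mu$: if $\hat L$ is $\mu$-strongly convex and $\Delta$ solves the dynamics, then $Z(s):=\Delta(s/\sqrt{\mu})$ solves the same equation with the $1$-strongly convex objective $g:=\hat L/\mu$, and $Z(0)=\Delta(0)$. Hence it suffices to prove the estimate for $\mu=1$ with a constant $C(\alpha,\gamma)$; substituting $s=\sqrt{\mu}\,t$ back restores the stated bound $\hat L(\Delta(t))-\hat L^\star\le C(\alpha,\gamma)\,\norm{\Delta(0)-\Delta^\star}^2/(\mu^{\alpha/2-1}t^{\alpha})$, the factor $\mu^{\alpha/2-1}$ arising solely from this change of variables.

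For the normalized ($\mu=1$) problem I would differentiate $V$, substitute $\ddot\Delta=-\frac{\gamma}{t}\dot\Delta-\nabla\hat L$ from \eqref{eq:NAG error dynamics}, and use convexity to cancel the $\dot\Delta$ cross-terms. A direct computation collapses most of the $\norm{\,\cdot\,}^2$ contributions and leaves a bound of the form
\begin{align*}
\dot V(t)\le \Big(\tfrac{3\alpha}{2}-\gamma\Big)t^{\alpha-1}\big(\hat L(\Delta)-\hat L^\star\big) + \text{(a residual quadratic form in $\Delta-\Delta^\star$ and its derivative)}.
\end{align*}
The hypothesis $2\le\alpha\le 2\gamma/3$ is used here, and only here, to force the coefficient $\frac{3\alpha}{2}-\gamma\le0$, which discards the loss term. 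When $\alpha=2$ the residual quadratic form vanishes identically, so $\dot V\le0$ and the bound $M=V(0^+)$ is immediate (no strong convexity is even needed).

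The main obstacle is the regime $\alpha>2$, where $V$ is genuinely non-monotone: the residual carries a coefficient $\sim t^{\alpha-3}$ that is positive for small $t$, so $V$ rises from $V(0^+)=0$ before eventually decreasing, and one cannot write $V(t)\le V(0^+)$. To close the argument I would invoke strong convexity a second time, now as the lower bound $\hat L(\Delta)-\hat L^\star\ge\frac12\norm{\Delta-\Delta^\star}^2$, in order to re-express the offending terms through $V$ itself via $\norm{\Delta-\Delta^\star}^2\le 2V/t^{\alpha}$, yielding a differential inequality $\dot V\le\phi(t)\,V$. Equivalently, one augments $V$ by a correction $\propto t^{\alpha-2}\norm{\Delta-\Delta^\star}^2$ so that the residual becomes a sign-definite combination absorbed into the energy. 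Either route then bounds $\sup_t V$ by a multiple of $\norm{\Delta(0)-\Delta^\star}^2$. Carrying out this bookkeeping so that the emerging constant depends only on $\alpha$ and $\gamma$ is the delicate step; everything else is routine differentiation together with the scaling reduction above.
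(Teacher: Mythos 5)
You should first be aware that the paper does not prove this statement at all: it is imported verbatim as Theorem~8 of \cite{su2014differential}, and the appendix only records that the cited proof rests on the Lyapunov function $\mathcal{E}(t;\alpha)$ displayed beneath the theorem. Your proposal follows exactly that energy, so the route coincides with the cited source. Several ingredients of your sketch are correct and worth keeping: the rescaling $t\mapsto t/\sqrt{\mu}$, $\hat L\mapsto\hat L/\mu$ does reduce to $\mu=1$ and is a clean way to see that the factor $\mu^{\alpha/2-1}$ is pure dimensional analysis; the derivative computation is right (the $t^{\alpha}\langle\nabla\hat L,\dot\Delta\rangle$ terms cancel algebraically, and strong convexity applied to $\langle\Delta-\Delta^{\star},\nabla\hat L\rangle$ produces the loss coefficient $\tfrac{3\alpha}{2}-\gamma\le 0$, which is exactly where $\alpha\le 2\gamma/3$ enters); and for $\alpha=2$ the residual vanishes so monotonicity of $V$ finishes the proof.

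The gap is in the only nontrivial case, $\alpha>2$, and neither of your two closing devices works as stated. After the cancellations the residual is $c_1t^{\alpha-3}\norm{\Delta-\Delta^{\star}}^2+c_2t^{\alpha-2}\langle\Delta-\Delta^{\star},\dot\Delta\rangle$ against the single good term $-\tfrac{(2\gamma-\alpha)\mu}{4}t^{\alpha-1}\norm{\Delta-\Delta^{\star}}^2$. Converting the residual into $V$ via $\norm{\Delta-\Delta^{\star}}^2\le 2V/(\mu t^{\alpha})$ and the corresponding bound on $\norm{\dot\Delta}$ gives $\dot V\le\phi(t)V$ with $\phi(t)\asymp t^{-2}+t^{-3}$ near the origin; since $\int_0^{\cdot}\phi=\infty$ while $V(0^+)=0$ for $\alpha>2$, Gronwall from $t=0$ yields the indeterminate form $0\cdot\infty$ and no bound on $\sup_tV$. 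The augmented-energy variant has the same small-$t$ obstruction: absorbing $\tfrac{d}{dt}\bigl(t^{\alpha-2}\norm{\Delta-\Delta^{\star}}^2\bigr)$ requires \emph{subtracting} a multiple of $t^{\alpha-2}\norm{\Delta-\Delta^{\star}}^2$ from $V$, which destroys the lower bound $V\ge t^{\alpha}(\hat L-\hat L^{\star})$ precisely on an initial interval $[0,T(\alpha,\gamma)]$, and the leftover coefficient of $\norm{\Delta-\Delta^{\star}}^2$ is positive there. Closing either route forces a separate treatment of that initial interval using a decay rate already established at a smaller exponent --- and that is exactly the content of the actual proof in \cite{su2014differential}, which proceeds by induction (bootstrap) on $\alpha$: the $\alpha=2$ case gives a monotone energy, and then $\int_0^t s^{\alpha-3}\norm{\Delta-\Delta^{\star}}^2\,ds\le\tfrac{2}{\mu}\int_0^t s^{\alpha-3}(\hat L-\hat L^{\star})\,ds$ is controlled by the rate proved at the previous stage. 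That induction is the substantive idea for $\alpha>2$; calling the remaining work ``bookkeeping'' understates a step your argument cannot currently complete.
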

We do not copy the proof in \cite{su2014differential} here but note that the above theorem was shown via a careful analysis of the Lyapunov function
$$
  \mathcal{E}(t ; \alpha)=t^{\alpha}\left(\hat L(\Delta(t))-\hat L^{\star}\right)+\frac{(2 \gamma-\alpha)^{2} t^{\alpha-2}}{8}\left\|\Delta(t)+\frac{2 t}{2 \gamma-\alpha} \dot{\Delta}(t)-\Delta^{\star}\right\|^{2}.
$$

\subsection{Convergence of Newton's Method}
\begin{proof}[Proof of \Cref{thm:Newton linear rate}]
In order to guarantee the linear convergence, we need the sum of matrices, which we refer to as $\H_{\textup{newton}}(t)=\sum_r \nabla f_r\left(\nabla f_r^\top \nabla f_r\right)^{-1}\nabla f_r^\top\in\R^{n\times n}$, to be positive definite for all $t$. Notice $\H_{\textup{newton}}(t)$ is clearly positive semi-definite and we show $\v^\top\left[\sum_r \nabla f_r(\nabla f_r^\top \nabla f_r)^{-1}\nabla f_r^\top\right]\v>0$, for any non-zero vector $\bm v$. In fact, we have
\begin{align*}
\v^\top\left[\sum_r \nabla f_r(\nabla f_r^\top \nabla f_r)^{-1}\nabla f_r^\top\right]\v
\geq \sum_r \lambda_{\min}\left((\nabla f_r^\top \nabla f_r)^{-1} \right)\|\nabla f_r^\top\v\|^2.
\end{align*}
Since the matrix $\nabla f_r^\top \nabla f_r$ is non-singular, all its eigenvalues are positive. Then the non-zero eigenvalues of $(\nabla f_r^\top \nabla f_r)^{-1}$ are equal to the inverse of eigenvalues of $\nabla f_r^\top \nabla f_r$. We obtain
\begin{align*}
&\v^\top\left[\sum_r \nabla f_r(\nabla f_r^\top \nabla f_r)^{-1}\nabla f_r^\top\right]\v
\geq \sum_r \frac{1}{\lambda_{\max}\left(\nabla f_r^\top \nabla f_r\right)}\|\nabla f_r^\top\v\|^2
\\
=&\sum_r \frac{1}{\lambda_{\max}\left(\nabla f_r \nabla f_r^\top\right)}\|\nabla f_r^\top\v\|^2
\geq\frac{1}{\max_r\lambda_{\max}\left(\nabla f_r \nabla f_r^\top\right)}\sum_r\|\nabla f_r^\top\v\|^2
\\
=&\frac{1}{\max_r\lambda_{\max}\left(\nabla f_r \nabla f_r^\top\right)}\v^\top\H(t)\v
\geq \frac{1}{\max_r\lambda_{\max}\left(\nabla f_r \nabla f_r^\top\right)}\frac{\lambda_0}{2}\|\v\|_2^2
\end{align*}
which is strictly positive and $\H(t)$ is as defined in \eqref{eq:NTK matrix}. Here the first equality holds since the non-zero eigenvalues of $\nabla f_r \nabla f_r^\top$ are the same as the non-zero eigenvalues of $\nabla f_r^\top \nabla f_r$. 

We now seek a time-independent lower bound for the smallest eigenvalue of $\H_{\textup{newton}}(t)$. In particular, since all eigenvalues of $\nabla f_r \nabla f_r^\top$ are non-negative, we see from \eqref{eq:H(t) definition} that,
\begin{align*}
\lambda_{\max}(\nabla f_r \nabla f_r^\top)\leq \text{tr}(\nabla f_r \nabla f_r^\top)=\sum_{i=1}^n \frac{1}{m}\I\left(\bm w_r(t)^\top \bm x_i\geq 0\right)\leq \frac{n}{m}.
\end{align*}
By the error dynamics \eqref{eq:newton error dynamics}, we can derive 
\begin{align}
\frac{d}{dt}(\Delta^\top\Delta)=2\Delta^\top\dot{\Delta}=-2\Delta^\top\H_{\textup{newton}}(t)\Delta\leq -\frac{\lambda_0 m}{n}\Delta^\top\Delta
\end{align}
which leads to $L(t)\leq\exp\left(-\frac{\lambda_0 mt}{n}\right)L(0)$.
\end{proof}

\end{document}